\theoremstyle{plain}
\newtheorem{theorem}{Theorem}[section]
\newtheorem{proposition}[theorem]{Proposition}
\theoremstyle{definition}
\theoremstyle{remark}
\icmltitlerunning{Faster Repeated Evasion Attacks in Tree Ensembles}
\definecolor{highlightcolor}{rgb}{0.858, 0.188, 0.478}
\tikzset{pics/.cd,
  subtree/.style={code={
      \path[draw] (0,0) -- (-0.3,-0.6) -- (0.3,-0.6) -- cycle;
      \node[] at (0.0, -0.4) {#1};
  }}
}
\begin{document}

\twocolumn[
\icmltitle{Faster Repeated Evasion Attacks in Tree Ensembles}




\begin{icmlauthorlist}
\icmlauthor{Lorenzo Cascioli}{kul}
\icmlauthor{Laurens Devos}{kul}
\icmlauthor{Ondřej Kuželka}{cvut}
\icmlauthor{Jesse Davis}{kul}
\end{icmlauthorlist}

\icmlaffiliation{kul}{Department of Computer Science, KU Leuven, Leuven, Belgium}
\icmlaffiliation{cvut}{Department of Computer Science, Czech Technical University, Prague, Czech Republic}

\icmlcorrespondingauthor{Lorenzo Cascioli}{lorenzo.cascioli@kuleuven.be}

\icmlkeywords{Machine Learning, Tree Ensembles, Adversarial Examples}

\vskip 0.3in
]



\printAffiliationsAndNotice{}  


\begin{abstract}
Tree ensembles are one of the most widely used model classes. 
However, these models are susceptible to adversarial examples, i.e., slightly perturbed examples that elicit a misprediction.
There has been significant research on designing approaches to construct such examples for tree ensembles. But this is a computationally challenging problem that often must be solved a large number of times (e.g., for all examples in a training set).  This is compounded by the fact that current approaches attempt to find such examples from scratch. In contrast, we exploit the fact that multiple similar problems are being solved. Specifically, our approach exploits the insight that  adversarial examples for tree ensembles tend to perturb a consistent but relatively small set of features. We show that we can quickly identify this set of features and use this knowledge to speedup constructing adversarial examples. 
\end{abstract}

\section{Introduction}

One of most popular and widely used class of models is tree ensembles which encompasses techniques such as gradient boosting \cite{friedman2001greedy} and random forests \cite{breiman2001random}. However, like other flexible model classes such as (deep) neural networks \cite{szegedy2013intriguing,goodfellow2014explaining}, they are susceptible to evasion attacks \cite{kantchelian2016evasion}.
That is, an adversary can craft an imperceptible perturbation that, when applied to an otherwise valid input example, elicits a misprediction by the ensemble.
There is significant interest in reasoning about tree ensembles to both generate such adversarial examples \cite{einziger19,zhang20} and perform empirical robustness checking \cite{kantchelian2016evasion,chen2019robustness,devos21a} where the goal is to determine how close the nearest adversarial example is. 

Generating adversarial examples is an NP-hard problem~\cite{kantchelian2016evasion}, which has spurred the development of approximate techniques~\cite{chen2019robustness,zhang20,devos21a}.  
These methods exploit the structure of the trees to find adversarial examples faster, for example, by using graph transformations \cite{chen2019robustness} or discrete (heuristic) search \cite{zhang20,devos21a,devos24multiclass-veritas}.
Still, these techniques can be slow, particularly if there is a large number of attributes in the domain. This is compounded by the fact that one often wants to generate large sets of adversarial examples.

A weakness to existing approaches is that they ignore the fact that adversarial example generation is often a sequential task where multiple similar problems are being solved in a row. That is, one has access to a large number of ``normal'' examples each of which should be perturbed to elicit a misprediction. Alas, existing approaches treat each considered example in isolation and solve the problem from scratch. However, there are likely regularities among the problems, meaning that the algorithms perform redundant work.
If these regularities can be identified efficiently and this information can be exploited to guide the search for an adversarial example, then the run time performance of repeated adversarial example generation can be improved. 

Studying these regularities in order to make adversarial example generation faster is an important problem. First, it advances our understanding of the nature of adversarial examples in tree ensembles and their generation methods. This might inspire improvements to generation methods, and in turn lead to better defense or detection methods. Second, model evaluation by verification \cite{ranzato2020,tornblom20,devos21a} is quickly becoming important as machine learning is applied in sensitive application areas. Being able to efficiently generate adversarial examples is crucial in the computation of empirical robustness (e.g.,~\cite{devos21a}), adversarial accuracy (e.g.,~\cite{vos21a-groot}), and for model hardening (e.g.,~\cite{kantchelian2016evasion}).

We propose a novel approach that analyzes previously solved adversarial example generation tasks to inform the search for subsequent tasks. 
Our approach is based on the observation that for a fixed learned tree ensemble, adversarial examples tend be generated by perturbing the same, relatively small set of features.
We propose a theoretically grounded manner to quickly find this set of features.
We propose two novel strategies to use the identified features to guide the search for adversarial examples, one of which is guaranteed to produce an adversarial example if it exists.
We apply our proposed approach to two different algorithms for generating adversarial examples~\cite{kantchelian2016evasion,devos21a}. Empirically, our approaches result in speedups of up to 35x and of 7.7x on average.

\section{Preliminaries}

We briefly explain tree ensembles, evasion attacks, and the two adversarial generation methods that we will use in the experiments. We assume a $d$-dimensional input space $\mathcal{X} \subseteq \mathbb{R}^d$ and binary output space $\mathcal{Y} = \{-1,1\}$. We focus on binary classification because most existing methods for verifying or generating adversarial examples for tree ensembles are designed for this setting~\cite{andriushchenko19,kantchelian2016evasion,devos21a}.

\subsection{Tree Ensembles}

Tree ensembles include popular algorithms such as (gradient) boosted decision trees (GBDTs) \cite{friedman2001greedy} (e.g., XGBoost \cite{chen2016xgboost}) and random forests \cite{breiman2001random} (e.g., as in scikit-learn \cite{scikit-learn}). A tree ensemble contains a number of trees and most implementations only learn binary trees.  A binary tree $T$ contains two types of nodes. \textit{Internal nodes} store references to a left and a right sub-tree, and a split condition on some attribute $f$ in the form of a less-than comparison $X_f < \tau $, where $\tau$ is the split value. \textit{Leaf nodes} have no children and only contain an output value. Each tree starts with a \textit{root node}, the only one without a parent. 
 
Given an example $x$, an individual tree is evaluated recursively starting from the \textit{root node}. In each internal node, the split condition is applied and if it is satisfied, then the example is sorted to the left subtree and if not it is sorted to the right one. This procedure terminates when a \textit{leaf node} is reached. The final prediction of the ensemble $\bm{T}(x)$ is obtained by combining the predicted leaf values for each tree in the ensemble.  In gradient boosting, the class probability is computed by applying a sigmoid transformation to the sum of the leaf values.

\subsection{Evasion Attacks}
\label{sec:adv_methods}

An \textit{evasion attack} involves carefully manipulating valid inputs $x$ into adversarial examples $\tilde{x}$ in order to evoke a misprediction \cite{kantchelian2016evasion}. More formally, we use the same definition of adversarial examples as used in existing work on tree ensembles~\cite{kantchelian2016evasion,hchen19robust,devos21a} and say that $\tilde{x}$ is an \textbf{adversarial example} for normal example $x$ when (1) ${\|\tilde{x} - x \|}_{\infty} < \delta$ where $\delta$ is a user-selected maximum distance (i.e., the two are sufficiently close), (2) the ensemble predicts the correct label for $x$, and (3) the model's predicted labels for $\tilde{x}$ and $x$ differ.

We now briefly describe the two existing adversarial example generation methods $\mathcal{A} : (\bm{T}, x, \delta, t_{\max}) \rightarrow \{ \mathit{SAT}(\tilde{x}), \mathit{UNSAT}, \mathit{TIMEOUT} \}$ used in this paper: \textit{kantchelian}~\cite{kantchelian2016evasion} and \textit{veritas}~\cite{devos21a}.
These methods take as input an ensemble $\bm{T}$, a normal example $x$, a maximum perturbation size $\delta$, and a timeout $t_{\max}$. They output $\mathrm{SAT}(\tilde{x})$, where $\tilde{x}$ is an adversarial example for $x$, $\mathrm{UNSAT}$, indicating that no adversarial example exists, or $\mathrm{TIMEOUT}$, indicating that no result could be found within timeout $t_{\max}$.
Timeouts are explicitly handled because adversarial example generation is NP-hard~\cite{kantchelian2016evasion}.

\textit{kantchelian} formulates the adversarial example generation task as a mixed-integer linear program (MILP) and uses a generic MILP solver (e.g., Gurobi~\cite{gurobi}). Specifically, \textit{kantchelian} directly minimizes the $\delta = {\| x - \tilde{x} \|}_\infty$ value. Given an example $x$, it computes:
\begin{equation}
        \min_{\tilde{x}} {{\| x - \tilde{x} \|}}_{\infty}  \quad\text{subject to}\quad \bm{T}(x) \neq \bm{T}(\tilde{x}).
        \label{eq:kan}
\end{equation}
This approach exploits the fact that a tree ensemble can be viewed as a set of linear (in)equalities. Three sets of MILP variables are used. \textit{Predicate variables} $p_i$ represent the split conditions, i.e., each $p_i$ logically corresponds to a split on an attribute $f$: $p_i \equiv f < \tau$. \textit{Leaf variables} $l_i$ indicate whether a leaf node is active. The \textit{bound variable} $b$ represents the $l_\infty$ distance between the original example $x$ and the adversarial example $\tilde{x}$.
Constraints between the variables encode the structure of the tree. A set of predicate consistency constraints encode the ordering between splits.
For example, if two split values $\tau_1 < \tau_2$ appear in the tree for attribute $f$, and  $p_1 \equiv f < \tau_1$ and $p_2 \equiv f < \tau_2$, then $p_1 \implies p_2$.
Leaf consistency constraints enforce that a leaf is only active when the splits on the root-to-leaf path to that leaf are satisfied.
Lastly, the mislabel constraint requires the output to be a certain class: for leaf values $v_i$, $\sum_i v_i l_i \lessgtr 0$.
The objective directly minimizes the \textit{bound variable}.

\textit{veritas} improves upon \textit{kantchelian} in terms of run time by formulating the adversarial example generation problem as a heuristic search problem in a graph representation of the ensemble (originally proposed by \cite{chen2019robustness}).
The nodes in this graph correspond to the leaves in the trees of the ensemble. Guided by a heuristic, the search then repeatedly selects compatible leaves. Leaves of two different trees are compatible when the conjunction of the split conditions along the root-to-leaf paths of the leaves are logically consistent. 
For a given $\delta$, \textit{veritas} solves the following optimization problem:\footnote{Note that we are abusing terminology: here, $\bm{T}(x)$ is the predicted probability. Previously, it was the predicted label.}
\begin{equation}
        \mathop{\mathrm{optimize}}_{\tilde{x}}\ \bm{T}(\tilde{x}) \quad\text{subject to}\quad
        {\| x - \tilde{x} \|}_\infty < \delta
        \label{eq:veritas}
\end{equation}
The output of the model $\bm{T}(\tilde{x})$ is maximized when the target class for $\tilde{x}$ is positive, and minimized otherwise.
While \textit{veritas} can also be used to directly optimize $\delta$, in this paper we will use a predefined $\delta$ for \textit{veritas}.

\section{Method}
\label{sec::method}

Adversarial example generation methods like \textit{kantchelian} and \textit{veritas} are typically applied in the following setting:

\begin{tabular}{rp{6cm}}
     \textbf{Given} & a tree ensemble $\bm{T}$, a set of test examples $V$, and a maximum perturbation size $\delta$, \\
     \textbf{Generate} &  adversarial examples for each $x \in V$.
\end{tabular}

The goal of this paper is to exploit the fact that adversarial examples are sequentially generated for each example in $V$. By analyzing previously found adversarial examples, we aim to improve the efficiency of adversarial example generation algorithms by biasing the search towards the perturbations that most likely to lead to an adversarial example. 

Our hypothesis is that some parts of the ensemble are disproportionately sensitive to small perturbations, i.e., crossing the thresholds of split conditions in these parts of the ensemble results in large changes in the predicted value. Prior work has hypothesized that robustness is related to fragile features and that such features are included in models because learners search for any signal that improves predictive performance~\cite{NEURIPS2019_e2c420d9}. One would expect that the attributes used in the split conditions in these disproportionately sensitive parts are exploited by adversarial examples more frequently than other attributes.  

Figure~\ref{fig:modfreq} illustrates this point by showing how often each attribute is perturbed in a set of a 10\,000 adversarial examples generated by \textit{kantchelian} for two different datasets.
The bar plots distinguish three categories of attributes: attributes that are never modified by any adversarial example (left), attributes that are modified by at least one but at most 5\% of all adversarial examples (middle), and attributes that are modified by more than 5\% of the adversarial examples.
Less than 10\% of the attributes are used by more than 5\% of the adversarial examples.
Thus the two questions are \textbf{(1) how can one identify these frequently-modified attributes and (2) how can algorithms exploit this knowledge to more quickly generate adversarial examples. }

\begin{figure}[h]
    \centering
    \footnotesize
    \includegraphics{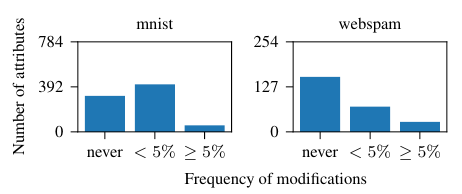}
    \caption{Bar plots showing that most attributes are not modified by the majority of adversarial examples (\textit{mnist} and \textit{webspam} only). The leftmost bar shows the number of attributes that are never changed by any of the 10,000 generated adversarial examples. The middle bar shows the number of attributes that are modified at least once but at most by 5\% of the adversarial examples. The rightmost bar shows the number of frequently modified features.}
    \label{fig:modfreq}
\end{figure}

At a high level, our proposed approach has two parts.
The first part simplifies the search for adversarial examples by only allowing perturbations to a limited subset of features. This is accomplished by exploiting the knowledge that certain feature values are fixed, which enables simplifying the ensemble by pruning away branches that can never be reached.
The second part identifies a subset of commonly perturbed features by counting how often each feature is perturbed by adversarial examples (Section~\ref{sec:identify_attr}). The size of this subset is determined by applying a theoretically grounded statistical test.

\subsection{Modifying the Search Procedure}%
\label{sec:modify_search}

Our proposed approach speeds up the adversarial example generation procedure by limiting the scope of the adversarial perturbations to a subset of features $F_{S}$.
This section assumes that we are given such a subset of features. The next section covers how to identify these features.

We consider three settings: \textit{full}, \textit{pruned}, and \textit{mixed}.
The \textit{full} setting corresponds to the original configuration of \textit{kantchelian} and \textit{veritas}: the methods may perturb any attribute within a certain maximum distance $\delta$. That is, for each attribute $f \in F$ with value $x_f$, the attribute values are limited to $[x_f -\delta, x_f + \delta]$.
Algorithm~\ref{alg:repgen} summarizes the \textit{pruned} and \textit{mixed} approaches. We now describe both in greater detail.

\paragraph{Pruned Approach} The \textit{pruned} setting disallows modifications to the attributes in the \textit{non-selected} set of attributes $F_{\mathit{NS}} = F \setminus F_{S}$. We accomplish this by pruning the trees in the ensemble.
Any node splitting on attributes in $F_{\mathit{NS}}$ is removed. Its parent node is directly connected to the only child node that can be reached by examples with the fixed value for the attribute. Figure~\ref{fig:pruning} shows an example of this procedure. We refer to this procedure as $\textsc{Prune}(\bm{T}, F_S, x)$.
The adversarial example methods can be applied as normal to the pruned ensemble, but they will only generate adversarial examples with perturbations to the attributes in $F_{S}$. Pruning simplifies the MILP problem of \textit{kantchelian} because all predicate variables $p_i$ that correspond to splits in internal nodes of pruned subtrees, and leaf variables $l_i$ that correspond to leaves of pruned subtrees can be removed from the mathematical formulation.
For \textit{veritas}, the search space is reduced in size because the pruned leaves are  removed from the graph representation of the ensemble.
Hence, for both systems, on average, the problem difficulty is reduced by pruning the ensembles.

Pruning the trees does not affect the validity of generated adversarial examples: If $\tilde{x}$ is an adversarial example generated for a normal example $x$ generated on a pruned ensemble, then $\tilde{x}$ is also an adversarial example for the full ensemble. 
\begin{proposition} Given normal example $x$ that is correctly classified by the full ensemble $\bm{T}_{\mathit{full}}$. Let $\bm{T}_{\mathit{prun}} = \textsc{Prune}(\bm{T}_{\mathit{full}}, F_{S}, x)$ and $\tilde{x}= \mathcal{A}(\bm{T}_{\mathit{prun}},x,\delta, t_{\max})$ (i.e., $\bm{T}_{\mathit{prun}}(x) \neq \bm{T}_{\mathit{prun}}(\tilde{x})$ and ${\| x - \tilde{x} \|}_\infty < \delta$). Then it holds that $\bm{T}_{\mathit{full}}(x) \neq \bm{T}_{\mathit{full}}(\tilde{x})$.

\label{prop:prune}
\end{proposition}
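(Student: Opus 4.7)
The proof will rest on a single structural observation: pruning on a node splitting attribute $f \in F_{\mathit{NS}}$ replaces that node with the subtree $x$ would have descended into, so the pruned ensemble and the full ensemble compute the \emph{same} leaf value on any point that agrees with $x$ on every coordinate in $F_{\mathit{NS}}$. I will first establish this equivalence, and then apply it to both $x$ and $\tilde x$.

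The plan is to first show the lemma-like claim: for any input $z \in \mathcal{X}$ with $z_f = x_f$ for all $f \in F_{\mathit{NS}}$, we have $\bm{T}_{\mathit{full}}(z) = \bm{T}_{\mathit{prun}}(z)$. I would prove this tree by tree. Fix a tree $T$ in $\bm{T}_{\mathit{full}}$ and let $T'$ be its image under $\textsc{Prune}(\cdot, F_S, x)$. Proceed by induction on the depth of nodes visited during evaluation. At the root, if the split attribute is in $F_S$, both $T$ and $T'$ have the same split and $z$ descends into the corresponding child in each; the induction step applies to the sub-trees. If instead the split attribute is in $F_{\mathit{NS}}$, then by the definition of $\textsc{Prune}$ the node is contracted to the child determined by $x$, and since $z_f = x_f$ on this attribute, the example $z$ in the full tree descends into that same child. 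Thus the evaluation paths coincide and yield identical leaf values; summing (or otherwise aggregating) over all trees then gives $\bm{T}_{\mathit{full}}(z) = \bm{T}_{\mathit{prun}}(z)$.

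Next I would apply this to the two examples in question. Applying it to $z = x$ is immediate since $x_f = x_f$ trivially. To apply it to $z = \tilde x$ I need $\tilde x_f = x_f$ for every $f \in F_{\mathit{NS}}$. This follows from the behavior of $\mathcal{A}$ on the pruned ensemble: after pruning, no split condition in $\bm{T}_{\mathit{prun}}$ references any attribute in $F_{\mathit{NS}}$, so the value of $\tilde x_f$ for $f \in F_{\mathit{NS}}$ is unconstrained by the tree structure and only appears in the distance constraint $\|\tilde x - x\|_\infty < \delta$. Both \textit{kantchelian} (which minimises $\|\tilde x - x\|_\infty$) and \textit{veritas} (which only deviates from $x$ when required by a relevant split) leave such coordinates at their original values $x_f$; we may assume this as a standard convention, or equivalently restrict $\tilde x$ by post-processing to agree with $x$ on $F_{\mathit{NS}}$ without altering $\bm{T}_{\mathit{prun}}(\tilde x)$ or the distance bound. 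With this, the lemma yields $\bm{T}_{\mathit{full}}(x) = \bm{T}_{\mathit{prun}}(x)$ and $\bm{T}_{\mathit{full}}(\tilde x) = \bm{T}_{\mathit{prun}}(\tilde x)$, and combining with the hypothesis $\bm{T}_{\mathit{prun}}(x) \neq \bm{T}_{\mathit{prun}}(\tilde x)$ gives $\bm{T}_{\mathit{full}}(x) \neq \bm{T}_{\mathit{full}}(\tilde x)$, as desired.

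The main obstacle is the last justification: strictly speaking, $\mathcal{A}$'s output on the pruned ensemble could in principle assign arbitrary values to coordinates in $F_{\mathit{NS}}$, and if one of those values crosses a threshold of a pruned-away split in $\bm{T}_{\mathit{full}}$ then the equivalence lemma no longer applies to $\tilde x$. The cleanest fix, which I would make explicit in the write-up, is to note that the $F_{\mathit{NS}}$-coordinates of $\tilde x$ are redundant for satisfying $\bm{T}_{\mathit{prun}}(\tilde x) \neq \bm{T}_{\mathit{prun}}(x)$, so we may assume (without loss of generality, by overwriting them) that $\tilde x_f = x_f$ on $F_{\mathit{NS}}$; this preserves both the misclassification on $\bm{T}_{\mathit{prun}}$ and the constraint $\|\tilde x - x\|_\infty < \delta$, and reduces the proof to the equivalence lemma above.
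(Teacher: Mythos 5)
Your proposal is correct and follows essentially the same argument as the paper: show that the full and pruned ensembles agree on any input matching $x$ on $F_{\mathit{NS}}$, apply this to both $x$ and $\tilde{x}$, and conclude from $\bm{T}_{\mathit{prun}}(x) \neq \bm{T}_{\mathit{prun}}(\tilde{x})$. Your explicit without-loss-of-generality handling of the $F_{\mathit{NS}}$-coordinates of $\tilde{x}$ is a more careful treatment of a step the paper simply asserts (``the values for features in $F_{\mathit{NS}}$ are fixed''), but it does not change the underlying route.
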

\begin{proof}
    Because only branches not visited by $x$ are removed, $\bm{T}_{\mathit{prun}}(x) = \bm{T}_{\mathit{full}}(x)$.
    The values for features in $F_{\mathit{NS}}$ are fixed, so these values are equal between $x$ and $\tilde{x}$.
    Hence, $\tilde{x}$ only visits branches in $\bm{T}_{\mathit{full}}$ that are also in $\bm{T}_{\mathit{prun}}$.
    Therefore, $\bm{T}_{\mathit{prun}}(\tilde{x}) = \bm{T}_{\mathit{full}}(\tilde{x})$
\end{proof}
However, an \textit{UNSAT} generated on a pruned ensemble is inconclusive. That is, it might still be the case that an adversarial example exists for the full ensemble, albeit one with perturbations to features in $F_{\mathit{NS}}$.
The \textit{pruned} setting generates a \textbf{false negative} if it reports \textit{UNSAT}, yet the \textit{full} setting reports \textit{SAT}.

\paragraph{Mixed Approach}
The \textit{mixed} setting takes advantage of the fast adversarial generation capabilities of the \textit{pruned} setting, but falls back to the \textit{full} setting when the \textit{pruned} setting returns an \textit{UNSAT} or times out.
A much stricter timeout $t_{\max}^{\mathit{prun}}$ is used for the \textit{pruned} setting to fully take advantage of the fast \textit{SAT}s, while avoiding spending time on an uninformative \textit{UNSAT}. 
The \textit{mixed} setting is guaranteed to find an adversarial example if the \textit{full} setting can find one.
\begin{theorem}
Assume a normal example $x$ and maximum distance $\delta$. If an adversarial example can be found for the full ensemble $\bm{T}_{\mathit{full}}$, then the \textit{mixed} setting is guaranteed to find an $\tilde{x}$ such that ${\|x-\tilde{x}\|}_\infty < \delta$ and $\bm{T}_{\textrm{full}}(x) \neq \bm{T}_{\textrm{full}}(\tilde{x})$.
\label{thrm:soundness}
\end{theorem}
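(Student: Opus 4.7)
The plan is a straightforward case analysis on what the inner (pruned) invocation inside the \textit{mixed} procedure returns. Let $r_{\mathit{prun}} = \mathcal{A}(\bm{T}_{\mathit{prun}}, x, \delta, t_{\max}^{\mathit{prun}})$ denote the result of that first call, where $\bm{T}_{\mathit{prun}} = \textsc{Prune}(\bm{T}_{\mathit{full}}, F_S, x)$. By the specification of $\mathcal{A}$, the value $r_{\mathit{prun}}$ is exactly one of $\mathit{SAT}(\tilde{x})$, $\mathit{UNSAT}$, or $\mathit{TIMEOUT}$, and these three cases exhaust every possible execution of the \textit{mixed} algorithm.

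In the first case, the \textit{mixed} procedure returns the $\tilde{x}$ produced on the pruned ensemble. I would then directly invoke \Cref{prop:prune}: since $\tilde{x}$ by assumption satisfies $\bm{T}_{\mathit{prun}}(x) \neq \bm{T}_{\mathit{prun}}(\tilde{x})$ and ${\|x - \tilde{x}\|}_\infty < \delta$, the proposition hands us $\bm{T}_{\mathit{full}}(x) \neq \bm{T}_{\mathit{full}}(\tilde{x})$ for free, and the distance bound transfers verbatim (it is a constraint on $x$ and $\tilde{x}$, not on the ensemble).

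In the remaining two cases, by the construction of the \textit{mixed} setting the procedure falls back to calling $\mathcal{A}(\bm{T}_{\mathit{full}}, x, \delta, t_{\max})$. The theorem's hypothesis that ``an adversarial example can be found for $\bm{T}_{\mathit{full}}$'' is then taken to mean that this second, full-ensemble invocation returns some $\mathit{SAT}(\tilde{x})$, which by definition satisfies both ${\|x - \tilde{x}\|}_\infty < \delta$ and $\bm{T}_{\mathit{full}}(x) \neq \bm{T}_{\mathit{full}}(\tilde{x})$, as required.

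The only real subtlety, and the thing I would call out explicitly, is the reading of the hypothesis: because $\mathcal{A}$ can itself time out, the statement only makes sense when ``can be found'' is interpreted as ``$\mathcal{A}$ on $\bm{T}_{\mathit{full}}$ with budget $t_{\max}$ returns $\mathit{SAT}$.'' Under that reading the proof needs no real machinery beyond \Cref{prop:prune} and the definition of the fallback rule, so I do not expect a genuine technical obstacle; the argument is essentially a two-line case split.
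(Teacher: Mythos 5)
Your proof is correct and follows essentially the same route as the paper's: a three-way case split on the pruned call's outcome, an appeal to \Cref{prop:prune} in the \textit{SAT} case, and the fallback to the full search otherwise. Your explicit remark that the hypothesis must be read as ``the full method returns \textit{SAT} within its budget'' is a reasonable clarification of what the paper compresses into ``it inherits the full method's guarantees,'' but it does not change the argument.
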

\begin{proof}
    The mixed setting first operates on the pruned ensemble $\bm{T}_{\textrm{prun}}$ using a tight timeout and optimizes Equation~\ref{eq:kan} or~\ref{eq:veritas} using \textit{kantchelian} or \textit{veritas} respectively.
    This returns (1) an adversarial example $\tilde{x}$, (2) an \textit{UNSAT} or (3) times out.
    In case (1), the generated adversarial example $\tilde{x}$ is also an adversarial example for the full ensemble (Prop~\ref{prop:prune}).
    In cases (2) and (3), the \textit{mixed} setting falls back to the \textit{full} setting operating on the full ensemble $\bm{T}_{\mathit{full}}$ with the same timeout.
    Hence, it inherits the full method's guarantees. 
\end{proof}

\begin{algorithm}[h]
   \caption{Fast repeated adversarial example generation}\label{alg:repgen}
\begin{algorithmic}[1]
    \STATE {\bfseries parameters:}
        maximum perturbation size $\delta$,
        timeouts $t^{\mathit{full}}_{\max}$ and $t^{\mathit{prun}}_{\max}$ for \textit{full} and \textit{pruned},
        generation method $\mathcal{A} : (\bm{T}, x, \delta, t) \rightarrow \{ \mathrm{SAT}(\tilde{x}), \mathrm{UNSAT}, \mathrm{TIMEOUT} \} $ 
    \vspace{0.5em}
    \FUNCTION{$\textsc{Generate}(\bm{T}_{\mathit{full}}, \mathcal{D}, F_{S}, \textit{mixed}\ \text{flag})$}
    \STATE $\tilde{\mathcal{D}} \leftarrow \emptyset$
    \FOR{$x \in \mathcal{D}$}
        \STATE $\bm{T}_{\mathit{prun}} \leftarrow \textsc{Prune}\left(\bm{T}_{\mathit{full}}, F_{S}, x\right)$
        $\qquad$ (Sec.~\ref{sec:modify_search})
        \STATE $\alpha \leftarrow \mathcal{A}\left(\bm{T}_{\mathit{prun}}, x, \delta, t_{\max}^{\mathit{prun}}\right)$
        \IF{$ \alpha \neq \mathrm{SAT}(\tilde{x})\ \land$ \textit{mixed} flag set}
            \STATE $\alpha \leftarrow \mathcal{A}\left(\bm{T}_{\mathit{full}}, x, \delta, t_{\max}^{\mathit{full}}\right)$
        \ENDIF
        \STATE $\tilde{\mathcal{D}} \leftarrow \tilde{\mathcal{D}} \cup \{ \alpha \}$
    \ENDFOR
    \STATE {\bfseries return:} $\tilde{\mathcal{D}}$
    \ENDFUNCTION
\end{algorithmic}
\end{algorithm}

\begin{figure}
    \footnotesize
    \centering
    \begin{tikzpicture}
        [level distance=6mm,
           every node/.style={inner sep=3pt},
        ]
        \node [rectangle,draw] {$\textsc{Height}<200$}
        child {node (l) {} edge from parent}
        child {node [rectangle,draw] {$\textsc{Age}<50$} edge from parent
            child {node (rl) {}}
            child {node (rr) {}}};

        \pic at (l) {subtree=a};
        \pic at (rl) {subtree=b};
        \pic at (rr) {subtree=c};

        \draw[->,thick] (1.9,0) -- (2.5,0);

        \node [rectangle,draw] at (4, 0) {$\textsc{Height}<200$}
        child {node (pl) {} edge from parent}
        child {node (pr) {} edge from parent};

        \pic at (pl) {subtree=a};
        \pic at (pr) {subtree=c};

    \end{tikzpicture} 
    \caption{An example tree using two attributes $\textsc{Height}$ and $\textsc{Age}$ (left). Suppose $F_NS=\{\textsc{Age}\}$. Given an example where $\textsc{Age}=55$, we can prune away the internal node splitting on $\textsc{Age}$. 
    In the resulting tree (right), subtree (b) is pruned because it is unreachable given that $\textsc{Age}=55$ and only subtrees (a) and (c) remain.}
    \label{fig:pruning}
\end{figure}
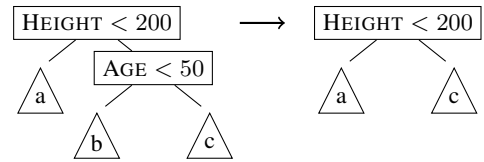

\subsection{Identifying Relevant Features}%
\label{sec:identify_attr}

A good subset of relevant attributes $F_S$ should satisfy two properties.
First,  it should minimize the number of false negatives, which occur when the \textit{pruned} approach reports \textit{UNSAT}, but the \textit{full} approach reports \textit{SAT}.
Second, the feature subset should be small. The smaller $F_{S}$ is, the more the ensemble can be pruned, and the faster the speedup is. These two objectives are somewhat in tension. Including more features will reduce the number of false negatives, but limit speeds up that are possible whereas using a very small subset will restrict the search too much resulting in many false negatives (or slow calls to the full search in the \textit{mixed} setting).
The procedure is given in Algorithm~\ref{alg:subset}.

We address the first requirement by adding features to the subset that  are frequently perturbed by adversarial examples. We rank features by counting how often each one differs between the perturbed adversarial examples in $\tilde{\mathcal{D}}$ so far and their corresponding normal examples in $\mathcal{D}$.

The second requirement is met by statistically testing whether the identified subset guarantees that the false negative rate is smaller than a given threshold with probability at least $1-\delta$, for a specified confidence parameter $\delta$.
If it is not guaranteed, then the subset is expanded. This is done at most 4 times for subsets of 5\%, 10\%, 20\%, 30\% of the features. If all tests fail, then a final feature subset of 40\% of the most commonly modified features is used. We do not go beyond 40\% because using the full feature set is then more efficient ($\textsc{ExpandFeatureSet}(F_S, \mathcal{D}, \tilde{\mathcal{D}})$ in Algorithm~\ref{alg:subset}). Each test is executed on a small set of $n$ generated adversarial examples. A first zeroth set is used merely for obtaining the first feature counts.

Next we give the details of how these tests are performed.
Take $\mathcal{D}_F = (x_1,x_2,\dots,x_N)$ the dataset we use to find the feature subset $F_S$. We define $\mathbf{v} = (v_1,v_2,\dots,v_N)$ to be the binary vector such that $v_i = 1$ if the \textit{pruned} search with the feature subset $F_S$ returns \textit{UNSAT} for the example $x_i$ but the \textit{full} search returns \textit{SAT}, and $v_i = 0$ otherwise. Then the true false negative rate corresponding to $F_S$ can be written as $\textit{FNR} = \frac{1}{N} \sum_{i=1}^N v_i$. Now, the small set of $n$ examples from which we are estimating the false negative rate is a random vector $\mathbf{X} = (X_1,X_2,\dots,X_n)$ sampled without replacement from $\mathcal{D}_F$. We also define $\mathbf{V} = (V_1,V_2,\dots,V_n)$ where $V_i$ is the random variable defined analogically to how we defined $v_i$. It follows that $\sum_{i=1}^n V_i$ is distributed as a hypergeometric random variable. Our null hypothesis is that $\textit{FNR}$ is greater than the threshold $\tau$. We reject the hypothesis if $\bar{V} = \frac{1}{n} \sum_{i=1}^n V_i$ takes a value smaller than the threshold by more than a margin $\Delta$. Next we bound the probability that, this happens, i.e., the probability $P\left[\bar{V} \leq \tau - \Delta\right]$, under the null hypothesis $\textit{FNR} \geq \tau$:
\begin{align}
    P[\bar{V} \leq \tau - \Delta] &= P[\bar{V} - \textit{FNR} \leq \tau - \Delta - \textit{FNR}] \nonumber \\
    &\leq P[ \bar{V} - \textit{FNR} \leq -\Delta],\label{eq:stattest_notau}
\end{align}
where the second inequality follows from the null hypothesis. 
Due to how we defined $\bar{V}$, we have $\mathbb{E}[\bar{V}] = \textit{FNR}$. Moreover, $n \cdot \bar{V}$ is distributed as a hypergeometric random variable, therefore we can use an exponential bound from \citet{greene2017exponential}, stated below in Theorem \ref{thrm:greene}, to bound the probability. Since the existing theorem bounds $P[\bar{X}_n - \mu \geq \varepsilon]$ instead of our $P[\bar{X}_n - \mu \leq -\varepsilon]$, we provide the needed manipulations after the theorem. 

\begin{theorem}[\citet{greene2017exponential}]
    \label{thrm:greene}
    Let $\bar{v} \sim \mathit{Hypergeometric(n, D, N)}$, a margin $\Delta$ as in Equation~\ref{eq:stattest_notau}, and $\lambda = \Delta \sqrt{n}$.
    Suppose $N > 4$ and $2 \leq n < D \leq N/2$.
    Then, for all $0 < \Delta < 1/2$:
    {\small
    \begin{align*}
        &P[ \sqrt{n}\left(\bar{v} - \mu \right) \geq \lambda ]
        \leq \sqrt{\frac{1}{2\pi \lambda ^2}} 
        \left( \frac{1}{2} \right)
        \\
        &\quad\cdot
        \sqrt{
            \left(
                \frac{N - n}{N}
            \right)
            \left(
                \frac{\sqrt{n} + 2\lambda}{\sqrt{n} - 2\lambda}
            \right)
            \left(
                \frac{N - n + 2\sqrt{n} \lambda }{N - n - 2\sqrt{n} \lambda }
            \right)
        }
        \\
        &\quad\cdot
        \exp\left(
            - \frac{2}{1-\frac{n}{N}} \lambda^2
        \right)
        \exp\left(
            - \frac{1}{3}
            \left(
                1+\frac{n^3}{{(N-n}^3}
            \right)
            \frac{\lambda^4}{n}
        \right).
    \end{align*}
    }
\end{theorem}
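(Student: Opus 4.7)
My plan is to derive this bound by classical Stirling-based tail analysis of the hypergeometric PMF, in the spirit of Greene's derivation. First I would write $P[\sqrt{n}(\bar v - \mu) \geq \lambda]$ as $\sum_{k \geq k_0} p(k)$ where $p(k) = \binom{D}{k}\binom{N-D}{n-k}/\binom{N}{n}$ and $k_0 = \lceil n\mu + \sqrt{n}\lambda \rceil$. This converts the probabilistic statement into a purely combinatorial sum that can be attacked with explicit non-asymptotic approximations.

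Next I would apply Stirling's formula in the form $m! = \sqrt{2\pi m}(m/e)^m e^{\theta_m/(12m)}$ with $0 < \theta_m < 1$ to every factorial appearing in $p(k_0)$. Collecting the $\sqrt{2\pi m}$ pieces produces the polynomial prefactor that, after substituting $k_0 = n\mu + \sqrt{n}\lambda$, turns into the product of square roots in the statement (involving $(N-n)/N$, $(\sqrt{n} + 2\lambda)/(\sqrt{n} - 2\lambda)$, and the analogous term with $N-n$). The leading exponential $\exp(-2\lambda^2/(1-n/N))$ arises from the second-order Taylor expansion of $\log p(k)$ around the mean, where the finite-population variance $np(1-p)(1-(n-1)/(N-1))$ naturally supplies the $1 - n/N$ in the denominator. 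The fourth-order correction $\exp(-\tfrac{1}{3}(1 + n^3/(N-n)^3)\lambda^4/n)$ comes from the next nonvanishing term in that Taylor expansion.

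The third step is upgrading the bound on the single term $p(k_0)$ to a bound on the whole tail. I would compare consecutive probabilities: the ratio $p(k+1)/p(k) = (D-k)(n-k)/((k+1)(N-D-n+k+1))$ is uniformly at most some $\rho < 1$ for $k \geq k_0$, so summing gives a geometric-series factor that, combined with the standard $1/(\sqrt{2\pi}\,\lambda)$ Gaussian-tail factor, produces the $\tfrac{1}{2}\sqrt{1/(2\pi\lambda^2)}$ prefactor.

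The main obstacle is the meticulous bookkeeping required to turn the Stirling error terms and the Taylor remainder into the explicit, non-asymptotic form stated in the theorem, while simultaneously verifying that the hypotheses $N > 4$, $2 \leq n < D \leq N/2$, and $0 < \Delta < 1/2$ keep every denominator strictly positive (in particular $\sqrt{n} - 2\lambda > 0$ and $N - n - 2\sqrt{n}\lambda > 0$, which is exactly where $\Delta < 1/2$ is used) and keep the Stirling remainders small enough to be absorbed into the fourth-order exponential rather than treated as separate multiplicative errors.
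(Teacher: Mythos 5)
This statement is not proved in the paper at all: it is quoted, with attribution, from \citet{greene2017exponential} and used as a black box in the subsequent manipulation of $P[\bar V - \mathit{FNR} \le -\Delta]$. So there is no in-paper proof to compare your attempt against; the only fair comparison is with the original reference, whose derivation is indeed a non-asymptotic Stirling analysis of the hypergeometric pmf in the spirit you describe. In that sense your outline points in a reasonable direction.

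However, as a proof it has a genuine gap, and it sits exactly where you place "the main obstacle." First, the step from the pointwise bound on $p(k_0)$ to the tail bound is wrong as stated: a uniform ratio bound $p(k+1)/p(k)\le\rho<1$ for $k\ge k_0$ yields a factor $1/(1-\rho)$, which is a constant greater than $1$ and cannot produce the $\tfrac12\sqrt{1/(2\pi\lambda^2)}$ prefactor. Near the threshold the ratio is $1-O(\lambda/\sqrt n)$, so the "geometric" sum actually contributes a factor of order $\sqrt n/\lambda$, and recovering the precise constant $\tfrac12\cdot\tfrac{1}{\sqrt{2\pi}\,\lambda}$ requires a Mills-ratio-type comparison of the discrete tail sum with a Gaussian integral, not a geometric series. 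Second, the claim that the Stirling remainders $e^{\theta_m/(12m)}$ and the Taylor remainder of $\log p(k)$ can be "absorbed into the fourth-order exponential" is precisely the quantitative content of the theorem: the stated bound has no slack term, so every remainder must be shown to have the correct sign or to be dominated with the exact constants $2/(1-n/N)$ and $\tfrac13(1+n^3/(N-n)^3)$ appearing in the exponents. Asserting that this bookkeeping works out is assuming the conclusion. Your verification of where $\Delta<1/2$ and $n\le N/2$ are needed for positivity of $\sqrt n-2\lambda$ and $N-n-2\sqrt n\lambda$ is correct, but the two issues above mean the proposal is a plan for a proof rather than a proof.
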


We define $W_i = 1-V_i$, $\mu_W = 1-\textit{FNR}$ and $\Delta = \lambda/\sqrt{n}$. It is clear that if $n \cdot \bar{V}$ is hypergeometric, so is $n \cdot \bar{W} = \sum_{i=1}^n W_i$.\footnote{Moreover, for any reasonably small acceptable FNR threshold $\tau$, with high probability we will either get $\bar{V} > \tau$ or it will hold $n < D$ where $D$ is the parameter of the hypergeometric distribution for $n \cdot \bar{W}$, allowing us to use the bound from Theorem \ref{thrm:greene}.} Then we can write:
{\footnotesize
\begin{align*}
    P[\sqrt{n}(\bar{W} - \mu_w) \geq \lambda]
    &= P\left[\bar{W} - \mu_w \geq \frac{\lambda}{\sqrt{n}} \right] \nonumber\\
    &= P\left[ \frac{1}{n}\sum_{i=1}^n (1-V_i) - 1 + \textit{FNR} \geq \frac{\lambda}{\sqrt{n}} \right] \nonumber\\
    &= P\left[ -\frac{1}{n}\sum_{i=1}^n V_i + \textit{FNR} \geq \frac{\lambda}{\sqrt{n}} \right] \nonumber\\
    &= P\left[ \bar{V} - \textit{FNR} \leq -\Delta \right].
\end{align*}}

Here, the last expression is what we need to bound and the first is what Theorem \ref{thrm:greene} bounds.

In the algorithm, we are given a confidence parameter $\eta$ and we determine $\Delta$ using the bound in Theorem~\ref{thrm:greene} so that the probability of incorrectly selecting a too small subset $F_S$ is smaller than $\eta$. 
Because we execute the test 4 times, we apply a union-bound correction of factor 4.
Choosing a confidence of 90\%, we extract $\Delta \in \{1/n, 2/n, \ldots, 1/2\}$ by computing the bound and stopping at the smallest $\Delta$ such that 
$P\left[ \bar{V} \leq \tau - \Delta \right] < \eta/4$. 
Note that there is a trade-off. The higher $n$, the better the statistical estimates and the counts are, but also the more examples we process with a potentially suboptimal feature subset.

\begin{algorithm}[h]
   \caption{Find feature subset}\label{alg:subset}
\begin{algorithmic}[1]
    \STATE $F_S \leftarrow \emptyset$
    \FOR{$k \in 0..4$}
        \STATE $\tilde{\mathcal{D}} \leftarrow \textsc{Generate}(\bm{T}, \mathcal{D}[kn, k(n+1)], F_{S}, \mathit{true}) \quad $
        \STATE $\bar{v} \leftarrow \frac{1}{n} \times $ number of false negatives in $\tilde{\mathcal{D}}$
        \STATE {\bfseries if} $\bar{v}$ exceeds $\tau-\Delta$, {\bfseries then} $F_S$, selecting most frequently perturbed features in $\tilde{\mathcal{D}}$ first.
        \STATE {\bfseries else break}  the loop
    \ENDFOR
\end{algorithmic}
\end{algorithm}

\section{Experiments}
\label{sec::experiments}
Empirically, we address the following questions:
\begin{enumerate}[label={Q\arabic*}]
\itemsep0em 
    \item Is our approach able to improve the run time performance of generating adversarial examples? 
    \item How does ensemble complexity affect our approach's performance?
    \item What is our empirical false negative rate?
    
\end{enumerate}

Because the described procedure is based on identifying a subset of relevant features, it makes sense to exploit it only when the dataset has a large number of dimensions.
Therefore, we present numerical experiments for ten classification tasks on high-dimensional datasets, using both tabular data and image data, as shown in Table \ref{tab::datasets}. 

\begin{table}[h]
\footnotesize
\begin{center}
 \caption{Datasets’ characteristics: \textit{N} and \textit{\#F} are the number of examples and the number of features. We also report the adopted values of max allowed perturbation $\delta$, for XGBoost and Random forest. 
 \textit{higgs} and \textit{prostate} are random subsets of the original, bigger datasets. Multi-class classification datasets were converted to binary classification: for \textit{covtype} we predict majority-vs-rest, for \textit{mnist} and \textit{fmnist}, we predict classes 0-4 vs. classes 5-9, and for \textit{sensorless} classes 0-5 vs. classes 6-10.}
\label{tab::datasets}
\vspace{1em}
\begin{tabular}{lrrrr}
    \toprule
    \textbf{Dataset} & N & \#F & $\delta$ XGB & $\delta$ RF  \\
    \midrule
    covtype & 581k & 54 & 0.1 & 0.3 \\
    fmnist & 70k & 784 & 0.3 & 0.3 \\
    higgs & 250k & 33 & 0.08 & 0.08\\
    miniboone & 130k & 51 & 0.08 & 0.08 \\
    mnist &  70k & 784 & 0.3 & 0.3 \\
    prostate & 100k & 103 & 0.1 & 0.2 \\
    roadsafety & 111k & 33 & 0.06 & 0.12 \\
    sensorless & 58.5k & 48 & 0.06 & 0.12  \\
    vehicle & 98k & 101 & 0.15 & 0.15\\
    webspam & 350k & 254 & 0.04 & 0.06 \\
    \bottomrule
\end{tabular}
\end{center}
\end{table}

\subsection{Experimental Setup}
\label{sec::experiments_setup}
We apply 5-fold cross validation for each dataset. We use four of the folds to train an XGBoost or a random forest ensemble $\bm{T}$.
From the test set, we randomly sample 10\,000 normal examples and attempt to generate adversarial examples by perturbing each one using the \textit{veritas} or \textit{kantchelian} attack.  
Each ensemble's hyperparameters are tuned using the grid search described in Appendix~\ref{sec::datasets}. 
The experiments ran on an Intel(R) E3-1225 CPU with 32GiB of memory.

The \textit{pruned} and \textit{mixed} settings work as follows.
We use the procedure from Section~\ref{sec:identify_attr} to select  a subset of relevant features.
Using Theorem~\ref{thrm:greene}, we find $n=100$ and $\Delta=0.1$. This gives us a $1-\eta = 90\%$ confidence that our true false negative rate is below $\tau=25\%$.
We then apply Algorithm~\ref{alg:subset}: we generate 5 sets of $n$ adversarial examples to (1) find which features are perturbed most often and (2) determine the size of the feature subset $F_S$.
After Algorithm~\ref{alg:subset} terminates, $F_S$ is fixed, and we run the \textit{pruned} and \textit{mixed} settings on all the remaining test examples (Algorithm~\ref{alg:repgen}).

We set a timeout of one minute for the \textit{full} setting, and a much stricter timeout of 1 (\textit{kantchelian}) or 0.1 (\textit{veritas}) seconds in the \textit{pruned} setting. We can be stricter with \textit{veritas} as it is an approximate method that is faster than the exact \textit{kantchelian}. 

\subsection{Q1: Run time}
\label{sec::runtime}

Table~\ref{tab::res_4} reports the average run time for the \textit{full} setting and the average speedup given by the \textit{pruned} and \textit{mixed} settings.  Our approach consistently speeds up \textit{kantchelian} with the \textit{pruned} approach yielding speedups between 2.4x-25.4x, and \textit{mixed} between 1.6x-7.1x. 
Using \textit{veritas}, we achieve speedups of between 1.6x-35.9x with the \textit{pruned} approach, and  between 1.0x-4.5x with the \textit{mixed} approach.  

Generally, \textit{kantchelian} benefits slightly more than \textit{veritas} regardless of which setting is used. This is because \textit{veritas} is already an approximate approach and its existing heuristics leave less room for improvement. In contrast, the feature pruning can greatly simplify the MILP problem, which consequently leads to faster run times.

The story is more complicated when considering the ensemble type. On XGB ensembles, both settings offer consistent wins. The \textit{mixed} setting falls back to the \textit{full} search on average 8\% of the time, regardless of the attack.\footnote{See Table~\ref{tab::res_extended} in the supplement} This helps it achieve a speedup by taking advantage of the fast \textit{SAT} results of the \textit{pruned} setting while still offering the theoretical guarantee from Theorem~\ref{thrm:soundness}.  

However, generating adversarial examples is more difficult for random forests (RF) than XGB.\footnote{When running \textit{kantchelian} the \textit{full} search hit the global time out of 6 hours, meaning that it terminated before attempting to generate 10,000 adversarial examples on six of the datasets.}  This leads to  the \textit{pruned} strategy offering larger wins than for XGB ensembles. 
For the RF ensembles, the \textit{pruned} setting often hits its timeout limit or fails more often, which leads to calls to the \textit{full} search 22\% of the time. 

\begin{table*}
\footnotesize
\centering

\caption{Average run times and speedups to verify 10\,000 test examples using \textit{kantchelian}/\textit{veritas} on an XGBoost/random forest ensemble for all three approaches: \textit{full}, \textit{pruned} and \textit{mixed}. A star denotes that the dataset exceeded the six hour global timeout.}
\vspace{1em}
\label{tab::res_4}
\begin{tabular}{@{}lS[table-format=2.1]rrS[table-format=2.1]rrS[table-format=2.1]rrS[table-format=2.1]rr@{}}
\toprule

& \multicolumn{3}{c}{\textbf{Kantchelian XGB}}
& \multicolumn{3}{c}{\textbf{Kantchelian RF}}
& \multicolumn{3}{c}{\textbf{Veritas XGB}}
& \multicolumn{3}{c}{\textbf{Veritas RF}} \\
\cmidrule(lr){2-4}
\cmidrule(lr){5-7}
\cmidrule(lr){8-10}
\cmidrule(lr){11-13}
& {\textit{full}} & {\textit{pruned}} & {\textit{mixed}}
& {\textit{full}} & {\textit{pruned}} & {\textit{mixed}} 
& {\textit{full}} & {\textit{pruned}} & {\textit{mixed}}
& {\textit{full}} & {\textit{pruned}} & {\textit{mixed}} \\

\midrule
covtype & 7.7\si{m} & 3.0\si{\times} & 2.3\si{\times} & 5.6\si{h} & 25.4\si{\times} & 6.6\si{\times} & 6.4\si{s} & 1.6\si{\times} & 1.4\si{\times} & 1.1\si{m} & 7.0\si{\times} & 1.6\si{\times} \\
fmnist & 1.3\si{h} & 6.6\si{\times} & 4.7\si{\times} & 4.8\si{h} & 7.6\si{\times} & 7.1\si{\times} & 1.2\si{m} & 1.6\si{\times} & 1.5\si{\times} & 6.9\si{m} & 3.6\si{\times} & 3.1\si{\times} \\
higgs & 3.7\si{h} & 2.8\si{\times} & 1.6\si{\times} & 6.0\si{h}*  & 3.3\si{\times} & 1.0\si{\times} & 1.4\si{m} & 7.5\si{\times} & 1.5\si{\times} & 59.8\si{m} & 13.7\si{\times} & 2.4\si{\times}  \\
miniboone & 2.7\si{h} & 5.1\si{\times} & 3.6\si{\times} & 6.0\si{h}* & 4.5\si{\times} & 1.1\si{\times} & 3.7\si{m} & 15.3\si{\times} & 1.7\si{\times} & 3.0\si{h} & 19.2\si{\times} & 2.0\si{\times}  \\
mnist & 20.0\si{m} & 6.3\si{\times} & 5.5\si{\times} & 2.7\si{h} & 9.8\si{\times} & 5.8\si{\times} & 1.1\si{m} & 2.3\si{\times} & 1.9\si{\times}  & 3.7\si{m} & 2.9\si{\times} & 2.7\si{\times}\\
prostate & 11.8\si{m} & 3.5\si{\times} & 3.0\si{\times} & 6.0\si{h}* & 2.7\si{\times} & 1.3\si{\times} & 12.4\si{s} & 2.3\si{\times} & 2.1\si{\times}  & 23.0\si{m} & 25.2\si{\times} & 2.6\si{\times} \\
roadsafety & 10.3\si{m} & 2.4\si{\times} & 2.0\si{\times} & 5.8\si{h} & 10.2\si{\times} & 2.4\si{\times} & 10.4\si{s} & 2.1\si{\times} & 1.7\si{\times} & 52.2\si{m} & 35.9\si{\times} & 4.5\si{\times} \\
sensorless & 27.1\si{m} & 2.8\si{\times} & 2.4\si{\times} & 6.0\si{h}* & 5.4\si{\times} & 2.9\si{\times} & 12.9\si{s} & 2.2\si{\times} & 1.8\si{\times}  & 3.0\si{m} & 5.4\si{\times} & 1.8\si{\times} \\
vehicle & 2.4\si{h} & 4.4\si{\times} & 3.1\si{\times} & 6.0\si{h}* & 3.8\si{\times} & 1.4\si{\times} & 12.0\si{m} & 19.9\si{\times} & 3.8\si{\times} & 43.6\si{m} & 5.8\si{\times} & 1.0\si{\times}\\
webspam & 23.7\si{m} & 5.6\si{\times} & 4.3\si{\times} & 6.0\si{h}* & 12.2\si{\times} & 7.2\si{\times} & 25.8\si{s} & 2.4\si{\times} & 2.0\si{\times}  & 12.5\si{m} & 3.6\si{\times} & 1.1\si{\times}\\

\bottomrule
\end{tabular}

\end{table*}

Figure~\ref{fig:time_vs_nverified_selection} shows the number of executed searches as a function of time in the four combinations of attack type and model type, for a selected four datasets.\footnote{The supplement shows these plots for all datasets.} For XGB, both attacks benefit. Moreover, the \textit{mixed} setting is typically very close in run time to the \textit{pruned}. On RF, we see that the \textit{pruned} setting offers larger speedups. However, we see a more noticeable difference between it and the \textit{mixed} search on several datasets. This indicates that the \textit{mixed} strategy most fall back more often to an expensive \textit{full} search. 

\begin{figure*}[h!]
    \centering
    \includegraphics{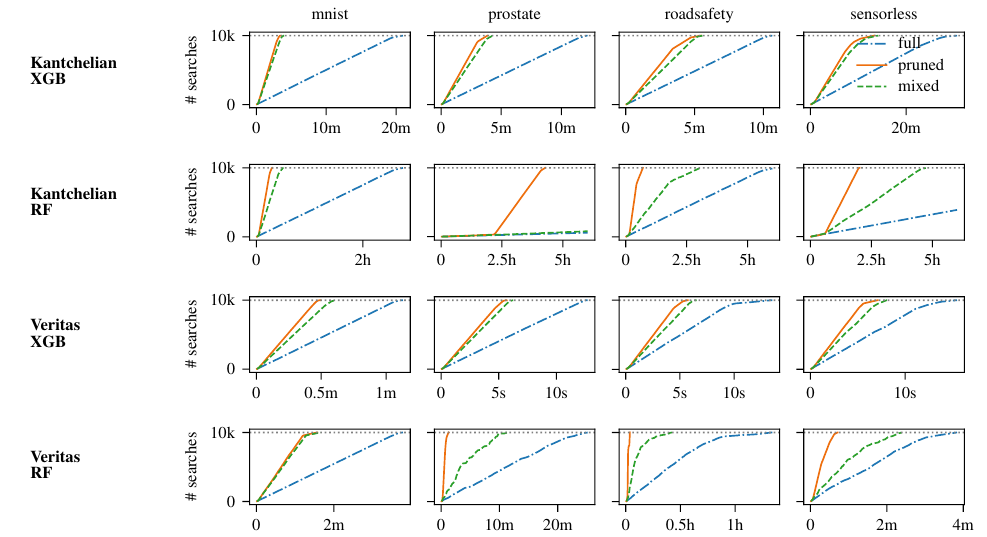}
    \caption{Average run times for 10\,000 calls to \textit{full}, \textit{pruned} and \textit{mixed}) for \textit{kantchelian} (top) and \textit{veritas} (bottom). Results are given for both XGBoost and random forest for four selected datasets.}
    \label{fig:time_vs_nverified_selection}
\end{figure*}

Finally, it is natural to wonder how the quality of the generated adversarial examples is affected by the modified search procedure. While this is difficult to quantify, Figure~\ref{fig:ex_quality} provides some examples of constructed adversarial examples for the \textit{mnist} dataset. Visually, the examples constructed by \textit{full} and \textit{pruned} for both attacks are very similar. The examples constructed using \textit{kantchelian} look more similar to the base example than those for \textit{veritas} because \textit{kantchelian}
 finds the closest possible adversarial example whereas \textit{veritas} has a different objective: it constructs an adversarial example that will explicit a highly confident misprediction. See Appendix~\ref{sec::advex_quality} for more generated examples.

\begin{figure}[h]
    \centering
    \includegraphics[scale=0.8]{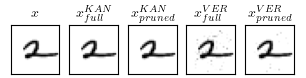}
    \caption{Generated adversarial examples for base example $x$ from \textit{mnist}, with both attacks and both \textit{full} and \textit{pruned} setting.}
    \label{fig:ex_quality}
\end{figure}

\begin{figure*}[h]
    \footnotesize
    \centering
    \includegraphics{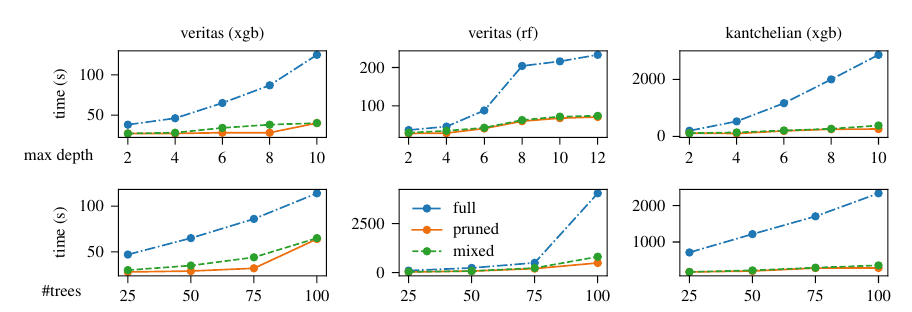}
    \caption{Run time of \textit{full}, \textit{mixed} and \textit{pruned} settings for varying the max depth (top) and number of estimators in the ensemble (bottom).}
    \label{fig:mnist}
\end{figure*}

\subsection{Q2: Scaling Behavior}
\label{sec::hyper_mnist}

Two key hyperparameters of XGB and RF  are the maximum depth of each learned tree and the number of trees in the ensemble. We explore how varying these affects the considered approach employing the same setup as described in Subsection 4.1. We use the \textit{mnist} dataset and omit \textit{kantchelian} with RFs due to its computational cost.

Figure~\ref{fig:mnist} (top) shows how the run time to perform 10,000 searches varies as function of the maximum tree depth for a fixed ensemble size of 50 for XGB and 25 for RF.  The run times for the \textit{pruned} and \textit{mixed} approaches grow very slowly as the depths are increased. In contrast, the \textit{full} search scales worse: deeper trees lead to higher run times.

Figure~\ref{fig:mnist} (bottom) shows how the run time to perform 10,000 searches varies as function of the ensemble size for a fixed maximum tree depth of 6 for XGB and 10 for RF.  Again, the \textit{pruned} and \textit{mixed} approaches show much better scaling behavior. Note that \textit{veritas}'s full search shows a very large jump on RF when moving from 75 to 100 trees.  These results indicate that our approaches will offer even better run time performance than the standard full search for more complex ensembles. 

\subsection{Q3: Empirical FNR}
We use Equation 3 to bound the false negative rate to be less than 25\% with high probability. Table~\ref{tab::res_extended} in the supplement reports the empirical false negative rates for all experiments. The average false negative rate is 6.4\% and the maximum is 11.8\%. Hence, empirically we achieve better results than the theory guarantees. Neither the ensemble method nor the attack type strongly influence the false negative rate.  

Still, we observe that with these false negative rates we can dramatically reduce the number of considered features. On average, $F_S$ contains 19.1\% of the features.  Out of 200 experiments,\footnote{5 folds x 10 datasets x 2 ensemble x 2 attacks} we only select the maximum percentage of features 15 times. Generally, \textit{kantchelian} requires slightly more features than \textit{veritas} and RF models requires slightly more features than XGB models. 

\section{Related Work}
Adversarial examples have been theoretically studied and defined in multiple different ways~\cite{Diochnos2018AdversarialRA,JMLR:v22:20-285}. Approaches to reason about learned tree ensembles have received substantial interest in recent years. These include algorithms for performing evasion attacks~\cite{kantchelian2016evasion,einziger19} (i.e., generate adversarial examples), perform robustness checking~\cite{chen2019robustness}, and verify that the ensembles satisfy certain criteria~\cite{devos21smt,devos21a,ranzato2020,tornblom20}.
Kantchelian et al. \citeyearpar{kantchelian2016evasion} were the first to show that, just like neural networks, tree ensembles are susceptible to evasion attacks. Their MILP formulation is still the most frequently used method to check robustness and generate adversarial examples. Beyond this exact approach, several approximate approaches exist~\cite{chen2019robustness,devos21a,wang20,zhang20} though not all of them are able to generate concrete adversarial examples (e.g., \cite{chen2019robustness,wang20}).

Other work focuses on making tree ensembles more robust. Approaches for this include adding generated adversarial examples to the training data (model hardening) \cite{kantchelian2016evasion}, or modifying the splitting procedure \cite{hchen19robust,calzavara2020treant,vos21a-groot}. 
Gaining further insights into how evasion attacks target tree ensembles, like those contained in this paper, may inspire novel ways to improve the robustness of learners.

\section{Conclusions}
This paper explored two methods to efficiently generate adversarial examples for tree ensembles. We showed that considering only the \emph{same subset of features} is typically sufficient to generate adversarial examples for tree ensemble models. We proposed a simple procedure to quickly identify such a subset of features, and two generic approaches that exploit it to speed up adversarial examples generation. We showed how to apply them to an exact (\textit{kantchelian}) and approximate (\textit{veritas}) evasion attack on tree ensembles, and discussed their properties and run time performances.

\section*{Acknowledgments}
This research is supported by The European Union’s Horizon Europe Research and Innovation program under the grant agreement TUPLES No. 101070149 (LC, LD, OK, JD), the Research Foundation-Flanders (FWO, LD: 1SB1322N), and the Flemish Government under the ``Onderzoeksprogramma Artificële Intelligentie (AI) Vlaanderen'' program (JD).

\section*{Broader Impact Statement}

Machine learning is widely used in many different application areas.
With the wide adoption, machine learned models, including tree ensembles, increasingly become high-stake targets for attackers who might employ evasion attacks to achieve their goal.

While this work proposes ways to speed up attacks, we feel it is important to shed on light on things that adversaries may do. Moreover, insights about possible attacks increase our understanding and may hence yield insights that result in improved defenses or ways to make tree ensembles more robust.

We strongly feel that it is in the interest of the research community that (1) the research community stays on top of these developments so that machine learning libraries can adapt if necessary, and (2) all work done in this area is open-access. For that reason, source codes used in this work will be made available upon acceptance.

\bibliography{main}
\bibliographystyle{icml2024}

\appendix
\section{Analysis of the Problem Setting }%
\label{sec::method:analysis}

Adversarial examples are generated for tasks like computing adversarial accuracy, computing empirical robustness, and performing model hardening. The effect of using the approximation proposed in this paper differs for each task.

Computing the adversarial accuracy of a classifier only requires determining whether an adversarial example $\tilde{x}$ exists within the given $\delta$ for each provided normal example $x$. Because the \textit{mixed} strategy reverts to the original complete search when the \textit{pruned} approach returns an \textit{UNSAT}, as stated in Theorem 3.2 it is guaranteed to find an adversarial example if it exists. Hence, the \textit{mixed} strategy can speed up computing the adversarial accuracy without affecting its value.

Computing the empirical robustness of a classifier requires finding the nearest adversarial example $\tilde{x}$ for each normal example $x$. Because the \textit{pruned} approach does not consider all features and the \textit{mixed} approach may not, they may return an adversarial example that is further away than if the full search space was considered. Hence, when using an exact attack like \textit{kantchelian}, the empirical robustness computed using the \textit{mixed} strategy is an overestimate of the true empirical robustness. We show this and we study what happens with an approximate method in Appendix~\ref{sec::advex_quality}.

In model hardening, a large number of adversarial examples are generated and added to the training data \cite{kantchelian2016evasion}. The \textit{pruned} approach can be used to generate a lot more adversarial examples in a fixed amount of time.

\section{Employed Datasets and Models}
\label{sec::datasets}

Table~\ref{tab::datasets_links} gives specific reference to each of the datasets used in the experiments.  
\begin{table*}[h!]
\footnotesize
\begin{center}
\caption{
References to all seven datasets used in the experiments. 
Selected datasets all have more than 50k examples and more than 30 features, so that adversarial example generation is typically time consuming and our methods can speed it up. 
Multiclass classification datasets are reduced to binary classification (classes 0-4 vs. class 5-9 for \textit{mnist} and \textit{fmnist}, classes 0-5 vs. classes 6-10 for \textit{sensorless}).}

\label{tab::datasets_links}


\begin{tabular}{ll}
\toprule
\textbf{Dataset}  & link \\
\midrule
covtype &  \url{https://www.openml.org/d/1596} \\
fmnist &  \url{https://www.openml.org/d/40996}\\
higgs &  \url{https://www.openml.org/d/42769}\\
miniboone &  \url{https://www.openml.org/d/44128}\\
mnist &   \url{https://www.openml.org/d/554} \\
prostate &   \url{https://www.openml.org/d/45672} \\
roadsafety &   \url{https://www.openml.org/d/45038} \\
sensorless & \url{https://archive.ics.uci.edu/dataset/325}\\
vehicle &   \url{https://www.openml.org/d/357} \\
webspam &  \url{https://www.csie.ntu.edu.tw/~cjlin/libsvmtools/datasets/binary.html#webspam}  \\
\bottomrule
\end{tabular}

\end{center}
\end{table*}

We tune ensemble-specific hyperparameters through grid search. In both model types, we choose the number of trees in $\{10, 20, 50\}$. Max depth is chosen in the range $[3,6]$ for XGBoost, and in $\{5, 7, 10\}$ for random forest (which typically needs deeper trees to work better). XGBoost learning rate is chosen among $\{0.1, 0.5, 0.9\}$. 
Table~\ref{tab::datasets_hyper} reports the tuned hyperparameters of the learned ensembles after the grid search. When running \textit{kantchelian} on random forests, due to long run times, we had to limit the number of estimators to 25. 

While the model sizes are smaller, these ensembles are already challenging for the full settings of \textit{kantchelian} and \textit{veritas}.  This is also highlighted in Section~\ref{sec::hyper_mnist} where we empirically study the effect of increasing the ensemble size on performance. Those results show that the \textit{full} procedures becomes increasingly slower as the ensemble complexity grows, and our method offers larger wins.

\begin{table}[h]
\footnotesize
\begin{center}
\caption{Learners' tuned hyperparameters after the grid search described in Section~\ref{sec::experiments}. Each ensemble $\bm{T}$ has maximum tree depth d and contains M trees. The learning rate for XGBoost is $\eta$.}
\label{tab::datasets_hyper}

\begin{tabular}{lrrrrr}
    \toprule
    & \multicolumn{3}{c}{\textbf{XGBoost}} & \multicolumn{2}{c}{\textbf{RF}} \\
    \textbf{Dataset} & M & d & $\eta$ & M & d  \\
    \midrule
    covtype & 50 & 6 & 0.9 & 50 & 10 \\
    fmnist & 50 & 6 & 0.1 & 50 & 10\\
    higgs & 50 & 6 & 0.1 & 50 & 10\\
    miniboone & 50 & 6 & 0.1 & 50 & 10 \\
    mnist &  50 & 6 & 0.5 & 50 & 10 \\
    prostate & 50 & 4 & 0.5 & 50 & 10\\
    roadsafety & 50 & 6 & 0.5 & 50 & 10\\
    sensorless &  50 & 6 & 0.5 & 50 & 10 \\
    vehicle & 50 & 6 & 0.1 & 50 & 10\\
    webspam & 50 & 5 & 0.5 & 50 & 10\\
    \bottomrule
\end{tabular}

\end{center}
\end{table}

\section{Expanded Experimental Results}

\subsection{Run time}
\label{sec::runtime_all}

\begin{figure*}[h!]
    \centering
    \includegraphics{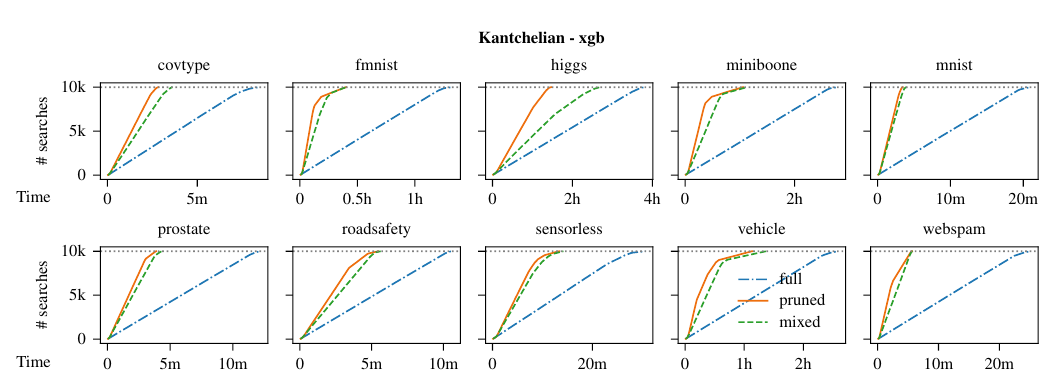}
    \vspace{0.5em}
    \includegraphics{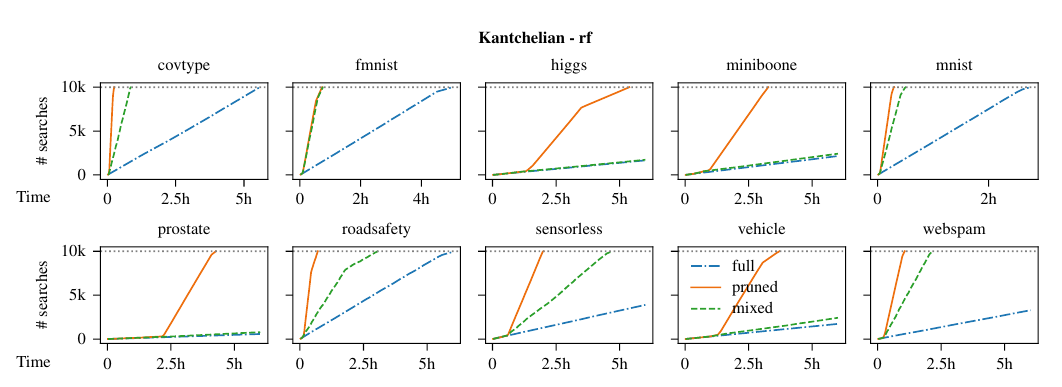}

    \caption{Run times to generate adversarial examples for 10\,000 test examples for the three presented settings (\textit{full}, \textit{pruned} and \textit{mixed}), using \textit{kantchelian} on an XGBoost/random forest ensemble, averaged over 5 folds.} 
    \label{fig:time_vs_nverified_kan}
\end{figure*}

Figure~\ref{fig:time_vs_nverified_veritas} shows the number of executed searches as a function of time when using \textit{veritas} attack on an XGBoost ensemble (top) or a random forest (bottom). 

\begin{figure*}[h!]
    \centering
    \includegraphics{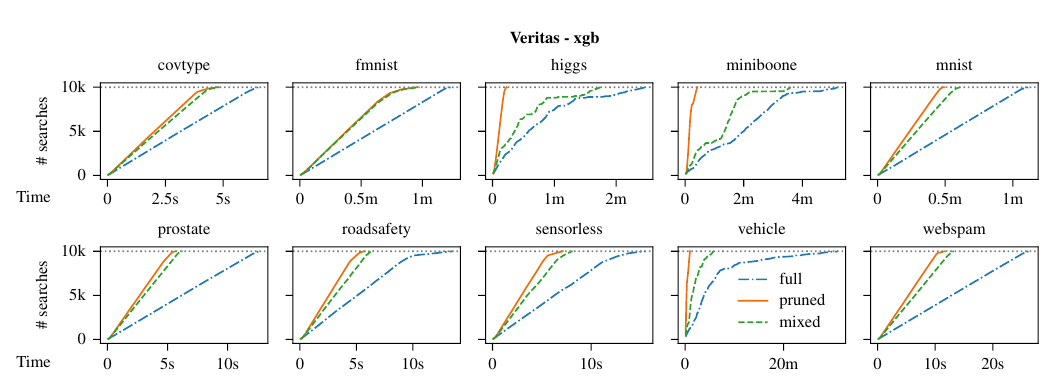}
    \vspace{0.5em}
    \includegraphics{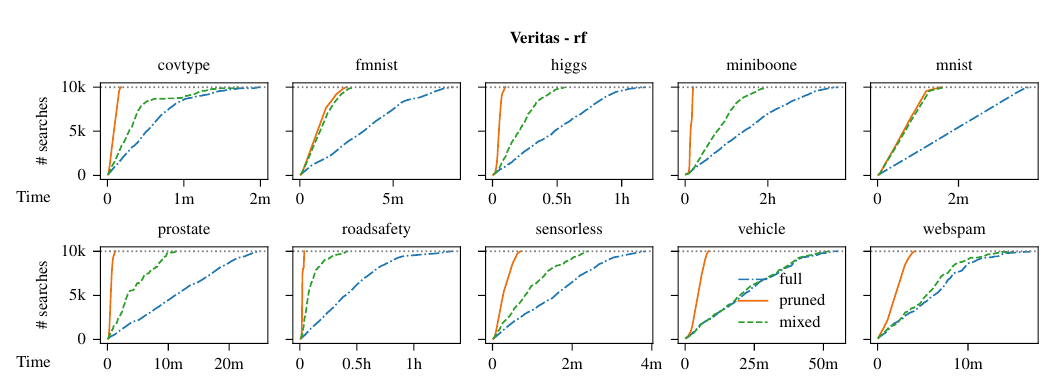}
    
    \caption{Run times to generate adversarial examples for 10\,000 test examples for the three presented settings (\textit{full}, \textit{pruned} and \textit{mixed}), using \textit{veritas} on an XGBoost/random forest ensemble, averaged over 5 folds.} 
    \label{fig:time_vs_nverified_veritas}
\end{figure*}

Table~\ref{tab::res_extended} shows average time to run 10,000 searches and speedups per dataset.  The averages are computed over five folds. There is one table for each combination of attack (\textit{kantchelian}, \textit{veritas}) and ensemble type (XGB, RF).  
For each dataset, we also report the average size of the relevant feature subset $F_S$, the percent of searches in the \textit{mixed} setting that require making a call to the \textit{full} search, the false negative rate (proportion of times that \textit{pruned} returns UNSAT but \textit{full} returns SAT),
and percent of examples that were skipped due to a method reaching the global timeout of six hours.

\begin{table*}
\footnotesize
\begin{center}
\caption{Average run times (and speedups) to verify 10\,000 test examples using \textit{kantchelian}/\textit{veritas} on an XGBoost/Random forest ensemble for all three approaches: \textit{full}, \textit{pruned} and \textit{mixed}. For each dataset, we also report the average size of the relevant feature subset, the number of calls to the \textit{full} setting during \textit{mixed} (= number of UNSAT + number of UNK for \textit{pruned}), the number of false negatives (\textit{pruned} returns UNSAT, but \textit{full} returns SAT), and the percent of examples that
were skipped due to a method reaching the global timeout of six hours. Experiments that exceeded such timeout are starred.}
\label{tab::res_extended}
\begin{center}\textbf{Kantchelian, XGBoost}\end{center}
\vspace{0.4em}
\begin{tabular}{@{}lS[table-format=2.1]S[table-format=2.1]S[table-format=2.1]S[table-format=2.1]S[table-format=2.1]S[table-format=2.1]S[table-format=2.1]S[table-format=2.1]S[table-format=2.1]S[table-format=2.1]S[table-format=2.1]@{}}
\toprule
 & {\textit{full}} & {\textit{pruned}} & {} & {\textit{mixed}} & {} & {\% rel. feats} & {\#full calls} & {\#false neg.}  & \textit{full skip} & \textit{prune skip} & \textit{mixed skip}\\
\midrule
covtype & 7.7\si{m} & 2.6\si{m} & 3.0\si{\times} & 3.4\si{m} & 2.3\si{\times} & 5.6\% & 7.5\% & 7.2\% & 0\% & 0\% & 0\% \\
fmnist & 1.3\si{h} & 11.3\si{m} & 6.6\si{\times} & 15.9\si{m} & 4.7\si{\times} & 12.3\% & 6.6\% & 6.6\% & 0\% & 0\% & 0\% \\
higgs & 3.7\si{h} & 1.3\si{h} & 2.8\si{\times} & 2.3\si{h} & 1.6\si{\times} & 21.2\% & 15.5\% & 1.9\% & 0\% & 0\% & 0\% \\
miniboone & 2.7\si{h} & 31.4\si{m} & 5.1\si{\times} & 44.9\si{m} & 3.6\si{\times} & 24.0\% & 11.2\% & 10.1\% & 0\% & 0\% & 0\% \\
mnist & 20.0\si{m} & 3.2\si{m} & 6.3\si{\times} & 3.6\si{m} & 5.5\si{\times} & 10.6\% & 2.1\% & 2.1\% & 0\% & 0\% & 0\% \\
prostate & 11.8\si{m} & 3.4\si{m} & 3.5\si{\times} & 4.0\si{m} & 3.0\si{\times} & 11.6\% & 7.8\% & 7.0\% & 0\% & 0\% & 0\% \\
roadsafety & 10.3\si{m} & 4.4\si{m} & 2.4\si{\times} & 5.2\si{m} & 2.0\si{\times} & 33.8\% & 6.2\% & 6.1\% & 0\% & 0\% & 0\% \\
sensorless & 27.1\si{m} & 9.7\si{m} & 2.8\si{\times} & 11.1\si{m} & 2.4\si{\times} & 31.7\% & 9.5\% & 9.0\% & 0\% & 0\% & 0\% \\
vehicle & 2.4\si{h} & 33.6\si{m} & 4.4\si{\times} & 46.8\si{m} & 3.1\si{\times} & 32.6\% & 10.0\% & 9.4\% & 0\% & 0\% & 0\% \\
webspam & 23.7\si{m} & 4.3\si{m} & 5.6\si{\times} & 5.5\si{m} & 4.3\si{\times} & 8.3\% & 5.5\% & 5.5\% & 0\% & 0\% & 0\% \\

\bottomrule
\end{tabular}

\begin{center}\textbf{Kantchelian, RF}\end{center}
\begin{tabular}{@{}lS[table-format=2.1]S[table-format=2.1]S[table-format=2.1]S[table-format=2.1]S[table-format=2.1]S[table-format=2.1]S[table-format=2.1]S[table-format=2.1]S[table-format=2.1]S[table-format=2.1]S[table-format=2.1]@{}}
\toprule
 & {\textit{full}} & {\textit{pruned}} & {} & {\textit{mixed}} & {} & {\% rel. feats} & {\#full calls} & {\#false neg.}  & \textit{full skip} & \textit{prune skip} & \textit{mixed skip}\\
\midrule
covtype & 5.6\si{h} & 13.1\si{m} & 25.4\si{\times} & 50.6\si{m} & 6.6\si{\times} & 5.6\% & 11.4\% & 10.7\% & 0\% & 0\% & 0\% \\
fmnist & 4.8\si{h} & 37.7\si{m} & 7.6\si{\times} & 40.3\si{m} & 7.1\si{\times} & 20.3\% & <1\% & <1\% & 0\% & 0\% & 0\% \\
higgs & 6.0\si{h}* & 4.4\si{h} & 3.3\si{\times} & 6.0\si{h} & 1.0\si{\times} & 25.8\% & 81.5\% & 1.4\% & 83.4\% & 0\% & 82.7\% \\
miniboone & 6.0\si{h}* & 3.3\si{h} & 4.5\si{\times} & 6.0\si{h} & 1.1\si{\times} & 34.0\% & 73.7\% & 2.2\% & 78.5\% & 0\% & 75.8\% \\
mnist & 2.7\si{h} & 16.2\si{m} & 9.8\si{\times} & 27.4\si{m} & 5.8\si{\times} & 11.1\% & 8.7\% & 8.7\% & 0\% & 0\% & 0\% \\
prostate & 6.0\si{h}* & 4.2\si{h} & 2.7\si{\times} & 6.0\si{h} & 1.3\si{\times} & 10.8\% & 54.9\% & <1\% & 94.3\% & 0\% & 92.1\% \\
roadsafety & 5.8\si{h} & 34.3\si{m} & 10.2\si{\times} & 2.4\si{h} & 2.4\si{\times} & 26.6\% & 13.9\% & 10.6\% & <1\% & 0\% & 0\% \\
sensorless & 6.0\si{h}* & 2.0\si{h} & 5.4\si{\times} & 4.6\si{h} & 2.9\si{\times} & 37.5\% & 15.3\% & 2.2\% & 61.1\% & 0\% & 0\% \\
vehicle & 6.0\si{h}* & 3.4\si{h} & 3.8\si{\times} & 6.0\si{h} & 1.4\si{\times} & 41.0\% & 54.2\% & 1.9\% & 82.7\% & 0\% & 75.8\% \\
webspam & 6.0\si{h}* & 1.0\si{h} & 12.2\si{\times} & 2.1\si{h} & 7.2\si{\times} & 10.4\% & 6.3\% & 1.8\% & 67.2\% & 0\% & 0\% \\
\bottomrule
\end{tabular}

\begin{center}\bf{Veritas, XGBoost}\end{center}
\begin{tabular}{@{}lS[table-format=2.1]S[table-format=2.1]S[table-format=2.1]S[table-format=2.1]S[table-format=2.1]S[table-format=2.1]S[table-format=2.1]S[table-format=2.1]S[table-format=2.1]S[table-format=2.1]S[table-format=2.1]@{}}
\toprule
 & {\textit{full}} & {\textit{pruned}} & {} & {\textit{mixed}} & {} & {\% rel. feats} & {\#full calls} & {\#false neg.} & \textit{full skip} & \textit{prune skip} & \textit{mixed skip}\\
\midrule
covtype & 6.4\si{s} & 4.1\si{s} & 1.6\si{\times} & 4.4\si{s} & 1.4\si{\times} & 7.0\% & 5.4\% & 5.2\% & 0\% & 0\% & 0\% \\
fmnist & 1.2\si{m} & 46.3\si{s} & 1.6\si{\times} & 47.2\si{s} & 1.5\si{\times} & 10.3\% & 1.6\% & 1.4\% & 0\% & 0\% & 0\% \\
higgs & 1.4\si{m} & 11.6\si{s} & 7.5\si{\times} & 58.0\si{s} & 1.5\si{\times} & 21.8\% & 4.5\% & 4.1\% & 0\% & 0\% & 0\% \\
miniboone & 3.7\si{m} & 14.7\si{s} & 15.3\si{\times} & 2.2\si{m} & 1.7\si{\times} & 22.0\% & 9.4\% & 8.4\% & 0\% & 0\% & 0\% \\
mnist & 1.1\si{m} & 28.2\si{s} & 2.3\si{\times} & 34.5\si{s} & 1.9\si{\times} & 5.2\% & 10.6\% & 10.6\% & 0\% & 0\% & 0\% \\
prostate & 12.4\si{s} & 5.3\si{s} & 2.3\si{\times} & 6.0\si{s} & 2.1\si{\times} & 11.2\% & 7.9\% & 7.0\% & 0\% & 0\% & 0\% \\
roadsafety & 10.4\si{s} & 5.0\si{s} & 2.1\si{\times} & 6.2\si{s} & 1.7\si{\times} & 33.8\% & 9.9\% & 9.8\% & 0\% & 0\% & 0\% \\
sensorless & 12.9\si{s} & 5.8\si{s} & 2.2\si{\times} & 7.2\si{s} & 1.8\si{\times} & 21.2\% & 9.8\% & 9.3\% & 0\% & 0\% & 0\% \\
vehicle & 12.0\si{m} & 36.2\si{s} & 19.9\si{\times} & 3.2\si{m} & 3.8\si{\times} & 20.2\% & 12.2\% & 11.8\% & 0\% & 0\% & 0\% \\
webspam & 25.8\si{s} & 10.6\si{s} & 2.4\si{\times} & 12.8\si{s} & 2.0\si{\times} & 5.4\% & 10.1\% & 10.1\% & 0\% & 0\% & 0\% \\
\bottomrule
\end{tabular}

\begin{center}\bf{Veritas, RF}\end{center}
\begin{tabular}{@{}lS[table-format=2.1]S[table-format=2.1]S[table-format=2.1]S[table-format=2.1]S[table-format=2.1]S[table-format=2.1]S[table-format=2.1]S[table-format=2.1]S[table-format=2.1]S[table-format=2.1]S[table-format=2.1]@{}}
\toprule
 & {\textit{full}} & {\textit{pruned}} & {} & {\textit{mixed}} & {} & {\% rel. feats} & {\#full calls} & {\#false neg.} & \textit{full skip} & \textit{prune skip} & \textit{mixed skip}\\
\midrule
covtype & 1.1\si{m} & 9.6\si{s} & 7.0\si{\times} & 42.6\si{s} & 1.6\si{\times} & 5.6\% & 11.2\% & 10.4\% & 0\% & 0\% & 0\% \\
fmnist & 6.9\si{m} & 1.9\si{m} & 3.6\si{\times} & 2.2\si{m} & 3.1\si{\times} & 10.9\% & 9.3\% & 5.0\% & 0\% & 0\% & 0\% \\
higgs & 59.8\si{m} & 4.4\si{m} & 13.7\si{\times} & 25.1\si{m} & 2.4\si{\times} & 33.9\% & 13.7\% & 7.4\% & 0\% & 0\% & 0\% \\
miniboone & 3.0\si{h} & 9.3\si{m} & 19.2\si{\times} & 1.5\si{h} & 2.0\si{\times} & 30.8\% & 14.8\% & 10.8\% & 0\% & 0\% & 0\% \\
mnist & 3.7\si{m} & 1.3\si{m} & 2.9\si{\times} & 1.4\si{m} & 2.7\si{\times} & 10.7\% & 4.9\% & 3.4\% & 0\% & 0\% & 0\% \\
prostate & 23.0\si{m} & 54.8\si{s} & 25.2\si{\times} & 8.7\si{m} & 2.6\si{\times} & 11.8\% & 11.0\% & 10.0\% & 0\% & 0\% & 0\% \\
roadsafety & 52.2\si{m} & 1.5\si{m} & 35.9\si{\times} & 11.5\si{m} & 4.5\si{\times} & 25.0\% & 8.7\% & 8.1\% & 0\% & 0\% & 0\% \\
sensorless & 3.0\si{m} & 32.9\si{s} & 5.4\si{\times} & 1.7\si{m} & 1.8\si{\times} & 21.2\% & 10.1\% & 9.7\% & 0\% & 0\% & 0\% \\
vehicle & 43.6\si{m} & 7.5\si{m} & 5.8\si{\times} & 42.1\si{m} & 1.0\si{\times} & 32.6\% & 33.6\% & 6.6\% & 0\% & 0\% & 0\% \\
webspam & 12.5\si{m} & 3.5\si{m} & 3.6\si{\times} & 11.0\si{m} & 1.1\si{\times} & 10.5\% & 12.7\% & 2.0\% & 0\% & 0\% & 0\% \\

\bottomrule
\end{tabular}
\end{center}
\end{table*}

Table~\ref{tab::res_stdev} extends run time results of the presented experiments 
by also reporting standard deviations.  

Finally, Figures~\ref{fig:time_vs_nverified_kan} and \ref{fig:time_vs_nverified_veritas} show the number of executed searches as a function of time for \textit{kantchelian} and \textit{veritas} on all ten datasets. Each plot contains the results for XGB (top two rows) and RF (bottom two rows).  Hence these plots show the complete set of results from Figure~\ref{fig:time_vs_nverified_selection} in the main paper. 


\begin{table}[h!]
\footnotesize
\begin{center}
\caption{Average run times (with standard deviations) to verify 10\,000 test examples using \textit{kantchelian}/\textit{veritas} on an XGBoost/Random forest ensemble for all three approaches: \textit{full}, \textit{pruned} and \textit{mixed}.}
\label{tab::res_stdev}
\vspace{0.4em}
\begin{center}\bf{Kantchelian, XGBoost}\end{center}
\begin{tabular}{l *{3}{r@{\,}c@{\,}l}}
\toprule
 & \multicolumn{3}{c}{\textit{full}} &  \multicolumn{3}{c}{\textit{pruned}} & \multicolumn{3}{c}{\textit{mixed}} \\
\midrule
covtype & 7.7\si{m} & $\pm$ & 33.4\si{s} & 2.6\si{m} & $\pm$ & 10.9\si{s} & 3.4\si{m} & $\pm$ & 14.5\si{s} \\
fmnist & 1.3\si{h} & $\pm$ & 2.7\si{m} & 11.3\si{m} & $\pm$ & 6.5\si{m} & 15.9\si{m} & $\pm$ & 4.6\si{m} \\
higgs & 3.7\si{h} & $\pm$ & 7.4\si{m} & 1.3\si{h} & $\pm$ & 9.7\si{m} & 2.3\si{h} & $\pm$ & 25.1\si{m} \\
miniboone & 2.7\si{h} & $\pm$ & 4.3\si{m} & 31.4\si{m} & $\pm$ & 16.1\si{m} & 44.9\si{m} & $\pm$ & 11.5\si{m} \\
mnist & 20.0\si{m} & $\pm$ & 45.6\si{s} & 3.2\si{m} & $\pm$ & 12.5\si{s} & 3.6\si{m} & $\pm$ & 12.4\si{s} \\
prostate & 11.8\si{m} & $\pm$ & 18.8\si{s} & 3.4\si{m} & $\pm$ & 24.5\si{s} & 4.0\si{m} & $\pm$ & 15.7\si{s} \\
roadsafety & 10.3\si{m} & $\pm$ & 15.7\si{s} & 4.4\si{m} & $\pm$ & 48.4\si{s} & 5.2\si{m} & $\pm$ & 17.0\si{s} \\
sensorless & 27.1\si{m} & $\pm$ & 2.5\si{m} & 9.7\si{m} & $\pm$ & 2.1\si{m} & 11.1\si{m} & $\pm$ & 2.0\si{m} \\
vehicle & 2.4\si{h} & $\pm$ & 6.8\si{m} & 33.6\si{m} & $\pm$ & 18.7\si{m} & 46.8\si{m} & $\pm$ & 17.8\si{m} \\
webspam & 23.7\si{m} & $\pm$ & 1.1\si{m} & 4.3\si{m} & $\pm$ & 1.6\si{m} & 5.5\si{m} & $\pm$ & 9.9\si{s} \\
\bottomrule
\end{tabular}

\begin{center}\bf{Kantchelian, RF}\end{center}
\begin{tabular}{l *{3}{r@{\,}c@{\,}l}}
\toprule
 & \multicolumn{3}{c}{\textit{full}} &  \multicolumn{3}{c}{\textit{pruned}} & \multicolumn{3}{c}{\textit{mixed}} \\
\midrule
covtype & 5.6\si{h} & $\pm$ & 1.3\si{m} & 13.1\si{m} & $\pm$ & 1.3\si{m} & 50.6\si{m} & $\pm$ & 1.0\si{m} \\
fmnist & 4.8\si{h} & $\pm$ & 15.8\si{m} & 37.7\si{m} & $\pm$ & 6.1\si{m} & 40.3\si{m} & $\pm$ & 5.0\si{m} \\
higgs & 6.0\si{h} & $\pm$ & 1.8\si{s} & 4.4\si{h} & $\pm$ & 56.6\si{m} & 6.0\si{h} & $\pm$ & 4.5\si{s} \\
miniboone & 6.0\si{h} & $\pm$ & 5.3\si{s} & 3.3\si{h} & $\pm$ & 56.5\si{s} & 6.0\si{h} & $\pm$ & 0.7\si{s} \\
mnist & 2.7\si{h} & $\pm$ & 5.0\si{m} & 16.2\si{m} & $\pm$ & 1.2\si{m} & 27.4\si{m} & $\pm$ & 2.6\si{m} \\
prostate & 6.0\si{h} & $\pm$ & 7.9\si{s} & 4.2\si{h} & $\pm$ & 5.0\si{m} & 6.0\si{h} & $\pm$ & 4.2\si{s} \\
roadsafety & 5.8\si{h} & $\pm$ & 13.2\si{m} & 34.3\si{m} & $\pm$ & 7.9\si{m} & 2.4\si{h} & $\pm$ & 38.1\si{m} \\
sensorless & 6.0\si{h} & $\pm$ & 0.2\si{s} & 2.0\si{h} & $\pm$ & 30.5\si{s} & 4.6\si{h} & $\pm$ & 7.3\si{m} \\
vehicle & 6.0\si{h} & $\pm$ & 0.9\si{s} & 3.4\si{h} & $\pm$ & 19.7\si{m} & 6.0\si{h} & $\pm$ & 2.5\si{s} \\
webspam & 6.0\si{h} & $\pm$ & 2.8\si{s} & 1.0\si{h} & $\pm$ & 2.7\si{m} & 2.1\si{h} & $\pm$ & 3.8\si{m} \\
\bottomrule
\end{tabular}


\begin{center}\bf{Veritas, XGBoost}\end{center}
\begin{tabular}{l *{3}{r@{\,}c@{\,}l}}
\toprule
 & \multicolumn{3}{c}{\textit{full}} &  \multicolumn{3}{c}{\textit{pruned}} & \multicolumn{3}{c}{\textit{mixed}} \\
\midrule
covtype & 6.4\si{s} & $\pm$ & 0.2\si{s} & 4.1\si{s} & $\pm$ & 0.3\si{s} & 4.4\si{s} & $\pm$ & 0.2\si{s} \\
fmnist & 1.2\si{m} & $\pm$ & 1.7\si{s} & 46.3\si{s} & $\pm$ & 6.6\si{s} & 47.2\si{s} & $\pm$ & 6.8\si{s} \\
higgs & 1.4\si{m} & $\pm$ & 32.6\si{s} & 11.6\si{s} & $\pm$ & 1.2\si{s} & 58.0\si{s} & $\pm$ & 24.2\si{s} \\
miniboone & 3.7\si{m} & $\pm$ & 49.4\si{s} & 14.7\si{s} & $\pm$ & 5.2\si{s} & 2.2\si{m} & $\pm$ & 42.0\si{s} \\
mnist & 1.1\si{m} & $\pm$ & 1.8\si{s} & 28.2\si{s} & $\pm$ & 0.8\si{s} & 34.5\si{s} & $\pm$ & 1.7\si{s} \\
prostate & 12.4\si{s} & $\pm$ & 0.2\si{s} & 5.3\si{s} & $\pm$ & 0.3\si{s} & 6.0\si{s} & $\pm$ & 0.2\si{s} \\
roadsafety & 10.4\si{s} & $\pm$ & 1.6\si{s} & 5.0\si{s} & $\pm$ & 0.5\si{s} & 6.2\si{s} & $\pm$ & 0.2\si{s} \\
sensorless & 12.9\si{s} & $\pm$ & 1.6\si{s} & 5.8\si{s} & $\pm$ & 0.7\si{s} & 7.2\si{s} & $\pm$ & 0.7\si{s} \\
vehicle & 12.0\si{m} & $\pm$ & 9.7\si{m} & 36.2\si{s} & $\pm$ & 18.6\si{s} & 3.2\si{m} & $\pm$ & 1.6\si{m} \\
webspam & 25.8\si{s} & $\pm$ & 0.6\si{s} & 10.6\si{s} & $\pm$ & 0.5\si{s} & 12.8\si{s} & $\pm$ & 0.5\si{s} \\
\bottomrule
\end{tabular}

\begin{center}\bf{Veritas, RF}\end{center}
\begin{tabular}{l *{3}{r@{\,}c@{\,}l}}
\toprule
 & \multicolumn{3}{c}{\textit{full}} &  \multicolumn{3}{c}{\textit{pruned}} & \multicolumn{3}{c}{\textit{mixed}} \\
\midrule
covtype & 1.1\si{m} & $\pm$ & 27.2\si{s} & 9.6\si{s} & $\pm$ & 0.5\si{s} & 42.6\si{s} & $\pm$ & 29.9\si{s} \\
fmnist & 6.9\si{m} & $\pm$ & 1.1\si{m} & 1.9\si{m} & $\pm$ & 27.6\si{s} & 2.2\si{m} & $\pm$ & 30.6\si{s} \\
higgs & 59.8\si{m} & $\pm$ & 6.7\si{m} & 4.4\si{m} & $\pm$ & 54.0\si{s} & 25.1\si{m} & $\pm$ & 5.4\si{m} \\
miniboone & 3.0\si{h} & $\pm$ & 37.1\si{m} & 9.3\si{m} & $\pm$ & 1.7\si{m} & 1.5\si{h} & $\pm$ & 19.2\si{m} \\
mnist & 3.7\si{m} & $\pm$ & 5.2\si{s} & 1.3\si{m} & $\pm$ & 8.6\si{s} & 1.4\si{m} & $\pm$ & 9.0\si{s} \\
prostate & 23.0\si{m} & $\pm$ & 2.0\si{m} & 54.8\si{s} & $\pm$ & 12.1\si{s} & 8.7\si{m} & $\pm$ & 2.1\si{m} \\
roadsafety & 52.2\si{m} & $\pm$ & 15.9\si{m} & 1.5\si{m} & $\pm$ & 22.2\si{s} & 11.5\si{m} & $\pm$ & 7.3\si{m} \\
sensorless & 3.0\si{m} & $\pm$ & 31.8\si{s} & 32.9\si{s} & $\pm$ & 8.8\si{s} & 1.7\si{m} & $\pm$ & 30.7\si{s} \\
vehicle & 43.6\si{m} & $\pm$ & 10.1\si{m} & 7.5\si{m} & $\pm$ & 51.5\si{s} & 42.1\si{m} & $\pm$ & 10.4\si{m} \\
webspam & 12.5\si{m} & $\pm$ & 2.8\si{m} & 3.5\si{m} & $\pm$ & 27.6\si{s} & 11.0\si{m} & $\pm$ & 2.8\si{m} \\
\bottomrule
\end{tabular}

\end{center}
\end{table}

\subsection{Timeouts}
\label{sec::timeouts}
Table~\ref{tab::timeouts} completes the discussion by showing the percentage of searches that timed out for each dataset, ensemble type and method.
In short, XGBoost ensembles are on average easier to verify, and the searches almost never time out. On the other hand, random forests are more challenging. It can happen that with a strict timeout, the \textit{pruned} setting is not able to find a solution, as the task remains complex even working with a reduced feature set. In those cases, \textit{pruned} ends with a TIMEOUT and \textit{mixed} will have to execute the full search.
In particular for \textit{kantchelian} on random forest, datasets with a lot of \textit{pruned} timeouts are those that then hit the six hours global timeout. This is coherent with the discussion from Section~\ref{sec::runtime}.

\begin{table}[h!]
\footnotesize
\begin{center}
\caption{Average fraction of timeouts incurred when verifying 10\,000 test examples using \textit{kantchelian}/\textit{veritas} on XGBoost/Random forest for all three approaches: \textit{full}, \textit{pruned} and \textit{mixed}.}
\label{tab::timeouts}
\vspace{1em}
\begin{center}\bf{Kantchelian, XGBoost}\end{center}
\begin{tabular}{lccc}
\toprule
  & \textit{full} & \textit{pruned} & \textit{mixed} \\
\midrule
covtype & 0\% & 0\% & 0\% \\
fmnist & 0\% & $<$1\% & 0\% \\
higgs & 0\% & 13.2\% & 0\% \\
miniboone & 0\% & $<$1\% & 0\% \\
mnist & 0\% & 0\% & 0\% \\
prostate & 0\% & 0\% & 0\% \\
roadsafety & 0\% & 0\% & 0\% \\
sensorless & 0\% & $<$1\% & 0\% \\
vehicle & 0\% & $<$1\% & 0\% \\
webspam & 0\% & 0\% & 0\% \\
\bottomrule
\end{tabular}


\begin{center}\bf{Kantchelian, RF}\end{center}
\begin{tabular}{lccc}
\toprule
  & \textit{full} & \textit{pruned} & \textit{mixed} \\
\midrule
covtype & 0\% & $<$1\% & 0\% \\
fmnist & 0\% & $<$1\% & 0\% \\
higgs & $<$1\% & 69.6\% & $<$1\% \\
miniboone & 0\% & 59.8\% & 0\% \\
mnist & $<$1\% & 0\% & 0\% \\
prostate & $<$1\% & 43.5\% & $<$1\% \\
roadsafety & 0\% & 2.4\% & 0\% \\
sensorless & 0\% & 9.3\% & 0\% \\
vehicle & 0\% & 38.4\% & 0\% \\
webspam & 0\% & $<$1\% & 0\% \\
\bottomrule
\end{tabular}


\begin{center}\bf{Veritas, XGBoost}\end{center}
\begin{tabular}{lccc}
\toprule
  & \textit{full} & \textit{pruned} & \textit{mixed} \\
\midrule
covtype & 0\% & 0\% & 0\% \\
fmnist & 0\% & $<$1\% & 0\% \\
higgs & 0\% & 0\% & 0\% \\
miniboone & $<$1\% & 0\% & $<$1\% \\
mnist & 0\% & 0\% & 0\% \\
prostate & 0\% & 0\% & 0\% \\
roadsafety & 0\% & 0\% & 0\% \\
sensorless & 0\% & $<$1\% & 0\% \\
vehicle & $<$1\% & $<$1\% & $<$1\% \\
webspam & 0\% & 0\% & 0\% \\

\bottomrule
\end{tabular}

\begin{center}\bf{Veritas, RF}\end{center}
\begin{tabular}{lccc}
\toprule
  & \textit{full} & \textit{pruned} & \textit{mixed} \\
\midrule
covtype & 0\% & 0\% & 0\% \\
fmnist & $<$1\% & 4.3\% & 0\% \\
higgs & $<$1\% & 5.2\% & $<$1\% \\
miniboone & $<$1\% & 2.7\% & $<$1\% \\
mnist & 0\% & 1.5\% & 0\% \\
prostate & $<$1\% & $<$1\% & $<$1\% \\
roadsafety & $<$1\% & $<$1\% & $<$1\% \\
sensorless & $<$1\% & $<$1\% & $<$1\% \\
vehicle & $<$1\% & 26.9\% & $<$1\% \\
webspam & $<$1\% & 10.6\% & $<$1\% \\
\bottomrule
\end{tabular}

\end{center}
\end{table}

\section{Quality of Generated Adversarial Examples}
\label{sec::advex_quality}

We extend Figure~\ref{fig:ex_quality} by further discussing the quality of generated adversarial examples providing more examples, and looking in detail at their distance with respect to the related base example. 

Figure~\ref{fig:advex_quality_big} shows a large set of adversarial examples generated for a \textit{mnist} digit using \textit{kantchelian} and \textit{veritas}.
For each attack, we plot the base example $x$ and the two adversarial examples generated with the \textit{full} and the \textit{pruned} setting. 

\begin{figure*}[h]
    \centering
    \includegraphics{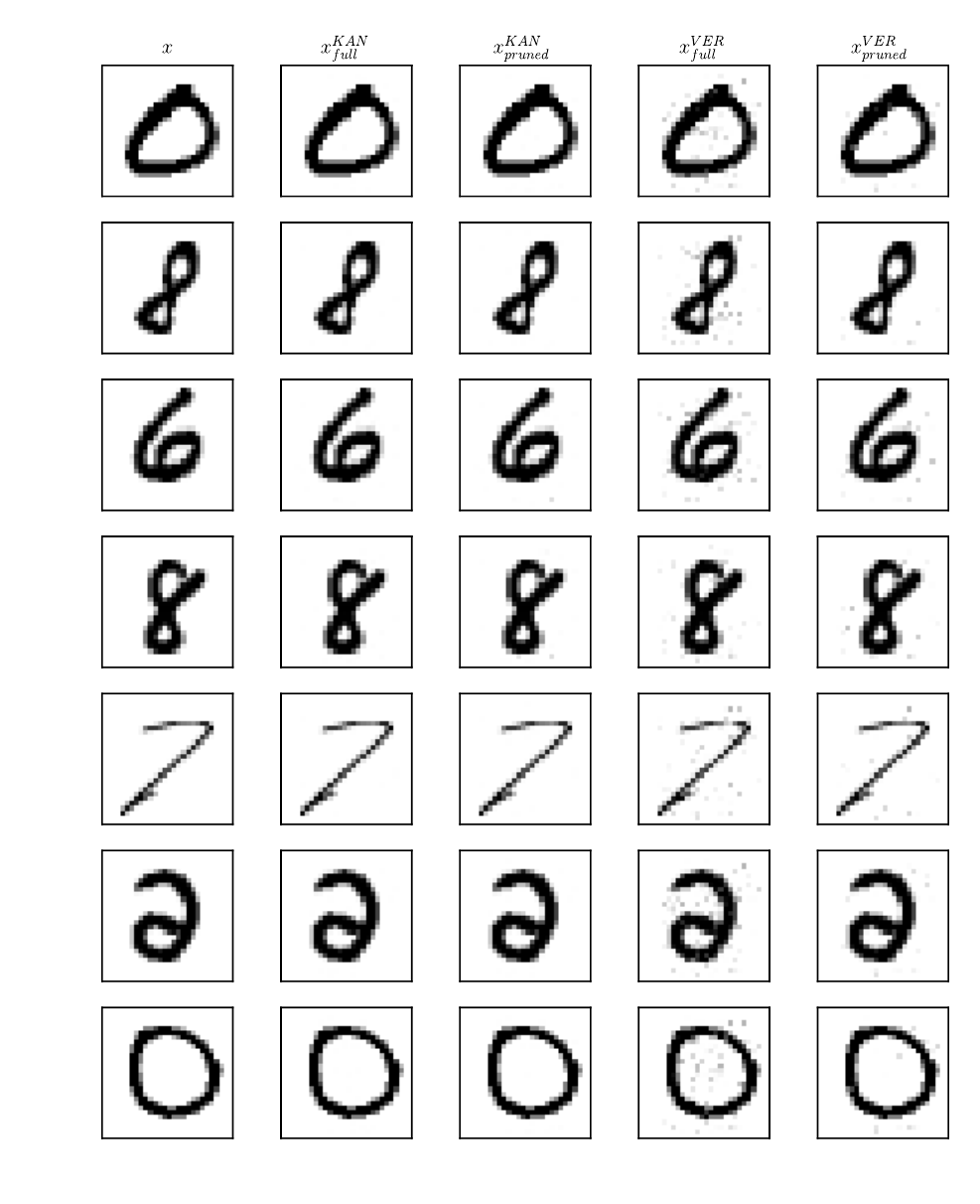}
   
    \caption{Adversarial examples generated for \textit{mnist} with both attacks (\textit{kantchelian} and \textit{veritas}) to show the quality of generated examples.} 
    \label{fig:advex_quality_big}
\end{figure*}

\subsection{Empirical Robustness}

Table~\ref{tab::emprob} shows the average empirical robustness for all experiments for the \textit{full}, \textit{pruned} and \textit{mixed} settings.
\textit{Empirical robustness} is defined as the average distance to the nearest adversarial example for each $x$ in our test set. We use adversarial examples generated with the experiments presented in Section~\ref{sec::runtime}. 

The objective of the \textit{kantchelian} attack is to find the closest adversarial example.  Given that the method is exact, the \textit{full} setting returns the optimal solution. 
The \textit{pruned} search works with a restricted feature set, thus it might not be able to find the closest adversarial example, if that requires altering features not included in the selected feature subset. As a consequence, the \textit{empirical robustness} values for the \textit{pruned} and \textit{mixed} search are overestimates of the true value given by the \textit{full} setting.

Unlike \textit{kantchelian}, \textit{veritas} does not try to find the closest adversarial example.  Instead, it maximizes the confidence that the ensemble assigns to the incorrect label. In this case, there is little difference in the empirical robustness values among all considered settings, with the \textit{pruned} and \textit{mixed} settings typically managing to even lower the distance to the base example. 
 
\begin{table}[h!]
\footnotesize
\begin{center}
\caption{Average empirical robustness (i.e., distance to the closest adversarial example) for the \textit{full}, \textit{mixed} and \textit{pruned} methods using \textit{kantchelian}/\textit{veritas} attacks on XGBoost/RF ensembles.}
\label{tab::emprob}
\vspace{1em}
\begin{center}\bf{Kantchelian, XGBoost}\end{center}
\begin{tabular}{l *{3}{r@{\,}c@{\,}l}}
\toprule
& \multicolumn{3}{c}{\textit{full}} &  \multicolumn{3}{c}{\textit{pruned}} & \multicolumn{3}{c}{\textit{mixed}} \\
\midrule
covtype & 0.016 & $\pm$ & 0.001 & 0.036 & $\pm$ & 0.0 & 0.037 & $\pm$ & 0.0 \\
fmnist & 0.031 & $\pm$ & 0.003 & 0.071 & $\pm$ & 0.011 & 0.07 & $\pm$ & 0.01 \\
higgs & 0.011 & $\pm$ & 0.0 & 0.015 & $\pm$ & 0.001 & 0.015 & $\pm$ & 0.001 \\
miniboone & 0.023 & $\pm$ & 0.0 & 0.031 & $\pm$ & 0.002 & 0.033 & $\pm$ & 0.003 \\
mnist & 0.008 & $\pm$ & 0.001 & 0.03 & $\pm$ & 0.006 & 0.029 & $\pm$ & 0.005 \\
prostate & 0.02 & $\pm$ & 0.0 & 0.037 & $\pm$ & 0.002 & 0.039 & $\pm$ & 0.002 \\
roadsafety & 0.005 & $\pm$ & 0.0 & 0.013 & $\pm$ & 0.002 & 0.013 & $\pm$ & 0.002 \\
sensorless & 0.009 & $\pm$ & 0.0 & 0.014 & $\pm$ & 0.001 & 0.015 & $\pm$ & 0.001 \\
vehicle & 0.016 & $\pm$ & 0.0 & 0.038 & $\pm$ & 0.011 & 0.037 & $\pm$ & 0.009 \\
webspam & 0.002 & $\pm$ & 0.0 & 0.005 & $\pm$ & 0.002 & 0.005 & $\pm$ & 0.002 \\
\bottomrule
\end{tabular}

\begin{center}\bf{Kantchelian, RF}\end{center}
\begin{tabular}{l *{3}{r@{\,}c@{\,}l}}
\toprule
& \multicolumn{3}{c}{\textit{full}} &  \multicolumn{3}{c}{\textit{pruned}} & \multicolumn{3}{c}{\textit{mixed}} \\
\midrule
covtype & 0.076 & $\pm$ & 0.0 & 0.089 & $\pm$ & 0.0 & 0.093 & $\pm$ & 0.0 \\
fmnist & 0.018 & $\pm$ & 0.001 & 0.039 & $\pm$ & 0.003 & 0.039 & $\pm$ & 0.002 \\
higgs & 0.016 & $\pm$ & 0.0 & 0.016 & $\pm$ & 0.005 & 0.017 & $\pm$ & 0.001 \\
miniboone & 0.027 & $\pm$ & 0.001 & 0.035 & $\pm$ & 0.0 & 0.03 & $\pm$ & 0.001 \\
mnist & 0.006 & $\pm$ & 0.0 & 0.035 & $\pm$ & 0.002 & 0.034 & $\pm$ & 0.003 \\
prostate & 0.048 & $\pm$ & 0.001 & 0.074 & $\pm$ & 0.0 & 0.067 & $\pm$ & 0.001 \\
roadsafety & 0.019 & $\pm$ & 0.001 & 0.023 & $\pm$ & 0.0 & 0.026 & $\pm$ & 0.0 \\
sensorless & 0.014 & $\pm$ & 0.0 & 0.022 & $\pm$ & 0.001 & 0.022 & $\pm$ & 0.001 \\
vehicle & 0.016 & $\pm$ & 0.0 & 0.025 & $\pm$ & 0.004 & 0.022 & $\pm$ & 0.001 \\
webspam & 0.003 & $\pm$ & 0.0 & 0.006 & $\pm$ & 0.0 & 0.007 & $\pm$ & 0.0  \\
\bottomrule
\end{tabular}

\begin{center}\bf{Veritas, XGBoost}\end{center}
\begin{tabular}{l *{3}{r@{\,}c@{\,}l}}
\toprule
& \multicolumn{3}{c}{\textit{full}} &  \multicolumn{3}{c}{\textit{pruned}} & \multicolumn{3}{c}{\textit{mixed}} \\
\midrule

covtype & 0.094 & $\pm$ & 0.0 & 0.088 & $\pm$ & 0.002 & 0.089 & $\pm$ & 0.002 \\
fmnist & 0.291 & $\pm$ & 0.002 & 0.282 & $\pm$ & 0.005 & 0.282 & $\pm$ & 0.005 \\
higgs & 0.074 & $\pm$ & 0.001 & 0.069 & $\pm$ & 0.001 & 0.069 & $\pm$ & 0.001 \\
miniboone & 0.077 & $\pm$ & 0.0 & 0.075 & $\pm$ & 0.0 & 0.076 & $\pm$ & 0.0 \\
mnist & 0.291 & $\pm$ & 0.001 & 0.265 & $\pm$ & 0.011 & 0.268 & $\pm$ & 0.01 \\
prostate & 0.097 & $\pm$ & 0.0 & 0.095 & $\pm$ & 0.001 & 0.095 & $\pm$ & 0.0 \\
roadsafety & 0.057 & $\pm$ & 0.0 & 0.056 & $\pm$ & 0.001 & 0.056 & $\pm$ & 0.001 \\
sensorless & 0.056 & $\pm$ & 0.0 & 0.052 & $\pm$ & 0.002 & 0.052 & $\pm$ & 0.002 \\
vehicle & 0.14 & $\pm$ & 0.001 & 0.133 & $\pm$ & 0.003 & 0.134 & $\pm$ & 0.002 \\
webspam & 0.039 & $\pm$ & 0.0 & 0.035 & $\pm$ & 0.002 & 0.035 & $\pm$ & 0.001 \\

\bottomrule
\end{tabular}

\begin{center}\bf{Veritas, RF}\end{center}
\begin{tabular}{l *{3}{r@{\,}c@{\,}l}}
\toprule
& \multicolumn{3}{c}{\textit{full}} &  \multicolumn{3}{c}{\textit{pruned}} & \multicolumn{3}{c}{\textit{mixed}} \\
\midrule
covtype & 0.271 & $\pm$ & 0.001 & 0.24 & $\pm$ & 0.006 & 0.243 & $\pm$ & 0.006 \\
fmnist & 0.294 & $\pm$ & 0.001 & 0.282 & $\pm$ & 0.004 & 0.283 & $\pm$ & 0.003 \\
higgs & 0.072 & $\pm$ & 0.0 & 0.07 & $\pm$ & 0.001 & 0.07 & $\pm$ & 0.0 \\
miniboone & 0.078 & $\pm$ & 0.0 & 0.077 & $\pm$ & 0.0 & 0.077 & $\pm$ & 0.0 \\
mnist & 0.29 & $\pm$ & 0.001 & 0.275 & $\pm$ & 0.005 & 0.276 & $\pm$ & 0.005 \\
prostate & 0.194 & $\pm$ & 0.0 & 0.186 & $\pm$ & 0.001 & 0.187 & $\pm$ & 0.001 \\
roadsafety & 0.11 & $\pm$ & 0.0 & 0.099 & $\pm$ & 0.005 & 0.1 & $\pm$ & 0.004 \\
sensorless & 0.112 & $\pm$ & 0.0 & 0.103 & $\pm$ & 0.003 & 0.104 & $\pm$ & 0.003 \\
vehicle & 0.139 & $\pm$ & 0.001 & 0.132 & $\pm$ & 0.002 & 0.136 & $\pm$ & 0.001 \\
webspam & 0.058 & $\pm$ & 0.0 & 0.053 & $\pm$ & 0.001 & 0.054 & $\pm$ & 0.001 \\

\bottomrule
\end{tabular}

\end{center}
\end{table}

\subsection{Change in Predicted Probability for Adversarial Examples}

\textit{veritas} tries to generate an adversarial example such that the ensemble assigns as a high a probability as possible to the incorrect label. Hence, a natural empirical measure for the quality of the examples generated is to compare the difference in the ensembles probabilistic predictions for the adversarial examples generated by each approach. Namely, we compute $T(\tilde{x}$) - $T(\tilde{x}')$ where $\tilde{x}$ is generated by the full search,  $\tilde{x}'$ is generated by the \textit{pruned} (\textit{mixed}) search, and (in an abuse of notation) $T(x)$ returns the probability an example belongs to most likely class.

Table~\ref{tab::diffprob_full_pruned} shows the average differences in predicted probability between \textit{full} and \textit{pruned}/\textit{mixed} adversarial examples.

Using \textit{kantchelian}, adversarial examples generated with our approaches are assigned very similar probabilities to those generated with the \textit{full} search. In \textit{veritas}, differences are typically higher, as the model output is directly optimized. 

\begin{table*}[h]
\footnotesize
\centering

\caption{Average difference in predicted probability between an adversarial example generated with the \textit{full} setting and an adversarial example generated with the \textit{pruned}/\textit{mixed} setting, for the same base example.}
\vspace{1em}
\label{tab::diffprob_full_pruned}
\begin{tabular}{@{}lrrrrrrrr@{}}
\toprule

& \multicolumn{2}{c}{\textbf{Kantchelian XGB}}
& \multicolumn{2}{c}{\textbf{Kantchelian RF}}
& \multicolumn{2}{c}{\textbf{Veritas XGB}}
& \multicolumn{2}{c}{\textbf{Veritas RF}} \\
\cmidrule(lr){2-3}
\cmidrule(lr){4-5}
\cmidrule(lr){6-7}
\cmidrule(lr){8-9}
& {\textit{pruned}} & {\textit{mixed}}
& {\textit{pruned}} & {\textit{mixed}} 
& {\textit{pruned}} & {\textit{mixed}}
& {\textit{pruned}} & {\textit{mixed}} \\

\midrule
covtype 
& 0.097 & 0.090
& 0.036 & 0.032  
& 0.106 & 0.100 
& 0.062 & 0.056 
\\

fmnist 
& 0.018 & 0.017 
& 0.045 & 0.045
& 0.319 & 0.314
& 0.376 & 0.341 
\\

higgs 
& 0.010  & 0.008 
& 0.016 & 0.006 
& 0.094 & 0.090 
& 0.053  & 0.046 
\\
miniboone
& 0.014 & 0.013 
& 0.033 & 0.014 
& 0.135 & 0.124 
& 0.086 & 0.075 
\\
mnist 
& 0.109 & 0.107 
& 0.045 & 0.041 
& 0.172 & 0.154 
& 0.29  & 0.276 
\\
prostate
& 0.026 & 0.024 
& 0.034 & 0.026 
& 0.231 & 0.213
& 0.230  & 0.206 
\\
roadsafety
& 0.129 & 0.12 
& 0.023 & 0.020 
& 0.178 & 0.160 
& 0.082 & 0.075 
\\
sensorless
& 0.058 & 0.052 
& 0.053 & 0.045 
& 0.122 & 0.110
& 0.148 & 0.133 
\\
vehicle
& 0.013 & 0.012 
& 0.024 & 0.014 
& 0.200 & 0.175 
& 0.153 & 0.101 
\\
webspam
& 0.047  & 0.044 
& 0.032 & 0.031 
& 0.273 & 0.245 
& 0.262 & 0.229 
\\
\bottomrule
\end{tabular}

\end{table*}

\section{Expanded Related Work}
\label{sec::exp_relwork}

Adversarial examples have been theoretically studied and defined in multiple different ways~\cite{Diochnos2018AdversarialRA,JMLR:v22:20-285}. More specifically, Ilyas et al. showed how certain features in a dataset might be fragile and thus naturally lead to adversarial examples~\cite{NEURIPS2019_e2c420d9}. 
Approaches to reason about learned tree ensembles have received substantial interest in recent years. These include algorithms for performing evasion attacks~\cite{kantchelian2016evasion,einziger19} (i.e., generate adversarial examples), perform robustness checking~\cite{chen2019robustness}, and verify that the ensembles satisfy certain criteria~\cite{devos21smt,devos21a,ranzato2020,tornblom20}.
Kantchelian et al. \cite{kantchelian2016evasion} were the first to show that, just like neural networks, tree ensembles are susceptible to evasion attacks. Their MILP formulation is still the most frequently used method to check robustness and generate adversarial examples.
Other notable methods for adversarial example generation are SMT-based systems \cite{einziger19,devos21smt}. These approaches propose varying ways to encode a tree ensemble in a set of logical formulas using the primitives from Satisfiability Modulo Theories (SMT). While the formulation of an ensemble in SMT is very elegant, it tends to perform worse than MILP in practice.

Because MILP and SMT are exact approaches,\footnote{MILP is technically anytime, but the approximate solutions are not useful in practice for this problem setting, see \cite{devos21a}.} they search for the optimal answer which in certain cases can be difficult (i.e., time consuming) to find. Often an approximate answer will be sufficient and several approximate methods have been proposed that are specifically tailored to tree ensembles. Chen et al. proposed a $K$-partite graph representation in which a max-clique corresponds to a specific output of the ensemble \cite{chen2019robustness,wang20}. They introduced a fast method to approximately evaluate robustness, but it cannot generate concrete adversarial examples. Devos et al. further improved upon this work by proposing a heuristic search procedure in this graph which is capable of finding concrete adversarial examples very effectively \cite{devos21a}.
Zhang et al. propose a method based on a greedy discrete search through the space of leaves specifically optimized for fast adversarial example generation \cite{zhang20}.

Another line of work focuses on making tree ensembles more robust. There are multiple approaches: adding generated adversarial examples to the training data (model hardening) \cite{kantchelian2016evasion}, modifying the splitting procedure \cite{hchen19robust,calzavara2020treant,vos21a-groot}, using the framework of optimal decision trees to encode robustness constraints \cite{vos22optimal}, relabeling and pruning the leaves of the trees \cite{vos22relabel}, simplifying the base learner \cite{andriushchenko19} and using a robust 0/1 loss \cite{guo22fast-robust}. Gaining further insights into how evasion attacks target tree ensembles, like those contained in this paper, may inspire novel ways to improve the robustness of learners.

\end{document}